\definecolor{mycolor}{gray}{0.92}
\theoremstyle{plain}
\newtheorem{theorem}{Theorem}[section]
\newtheorem{proposition}[theorem]{Proposition}
\newtheorem{lemma}[theorem]{Lemma}
\theoremstyle{definition}
\newtheorem{assumption}[theorem]{Assumption}
\theoremstyle{remark}
\begin{document}

\title{Preserving Domain Generalization in Fine-Tuning via Joint Parameter Selection}

\author{Bin Pan, Shiyu Shen, Zongbin Wang, Zhenwei Shi and Xia Xu
    
    \thanks{The work was supported by the National Key Research and Development Program of China under Grant 2022YFA1003800, and the Fundamental Research Funds for the Central Universities under grant 63243074. \emph{(Corresponding author: Xia Xu.)}}
    
    \thanks{Bin Pan, Shiyu Shen and Zongbin Wang are with the School of Statistics and Data Science, KLMDASR, LEBPS, and LPMC, Nankai University, Tianjin 300071, China
        (e-mail: panbin@nankai.edu.cn; shenshiyu@mail.nankai.edu.cn; wangzongbin@mail.nankai.edu.cn).}
    \thanks{Zhenwei Shi are with the Image Processing Center,
        School of Astronautics, Beihang University, Beijing 100191, China (e-mail: shizhenwei@buaa.edu.cn).}
    \thanks{Xia Xu (corresponding author) is with the School of Computer Science and Technology, Tiangong University, Tianjin 300387, China (e-mail: xuxia@tiangong.edu.cn).}
}

\markboth{under review}%
{Shell \MakeLowercase{\textit{et al.}}: A Sample Article Using IEEEtran.cls for IEEE Journals}

\maketitle

\begin{abstract}
Domain generalization seeks to develop models trained on a limited set of source domains that are capable of generalizing effectively to unseen target domains. While the predominant approach leverages large-scale pre-trained vision models as initialization, recent studies have highlighted that full fine-tuning can compromise the intrinsic generalization capabilities of these models. To address this limitation, parameter-efficient adaptation strategies have emerged, wherein only a subset of model parameters is selectively fine-tuned, thereby balancing task adaptation with the preservation of generalization. Motivated by this paradigm, we introduce Joint Parameter Selection (JPS), a novel method that restricts updates to a small, sparse subset of parameters, thereby retaining and harnessing the generalization strength of pre-trained models. Theoretically, we establish a generalization error bound that explicitly accounts for the sparsity of parameter updates, thereby providing a principled justification for selective fine-tuning. Practically, we design a selection mechanism employing dual operators to identify and update parameters exhibiting consistent and significant gradients across all source domains. Extensive benchmark experiments demonstrate that JPS achieves superior performance compared to state-of-the-art domain generalization methods, substantiating both the efficiency and efficacy of the proposed approach.
\end{abstract}

\begin{IEEEkeywords}
    Image Recognition, Efficient Fine-tuning, Domain Generalization, Parameter Selection
\end{IEEEkeywords}

\section{Introduction\label{submission}}

\begin{figure}[htbp]
	\begin{center}
		\centerline{\includegraphics[width=\columnwidth]{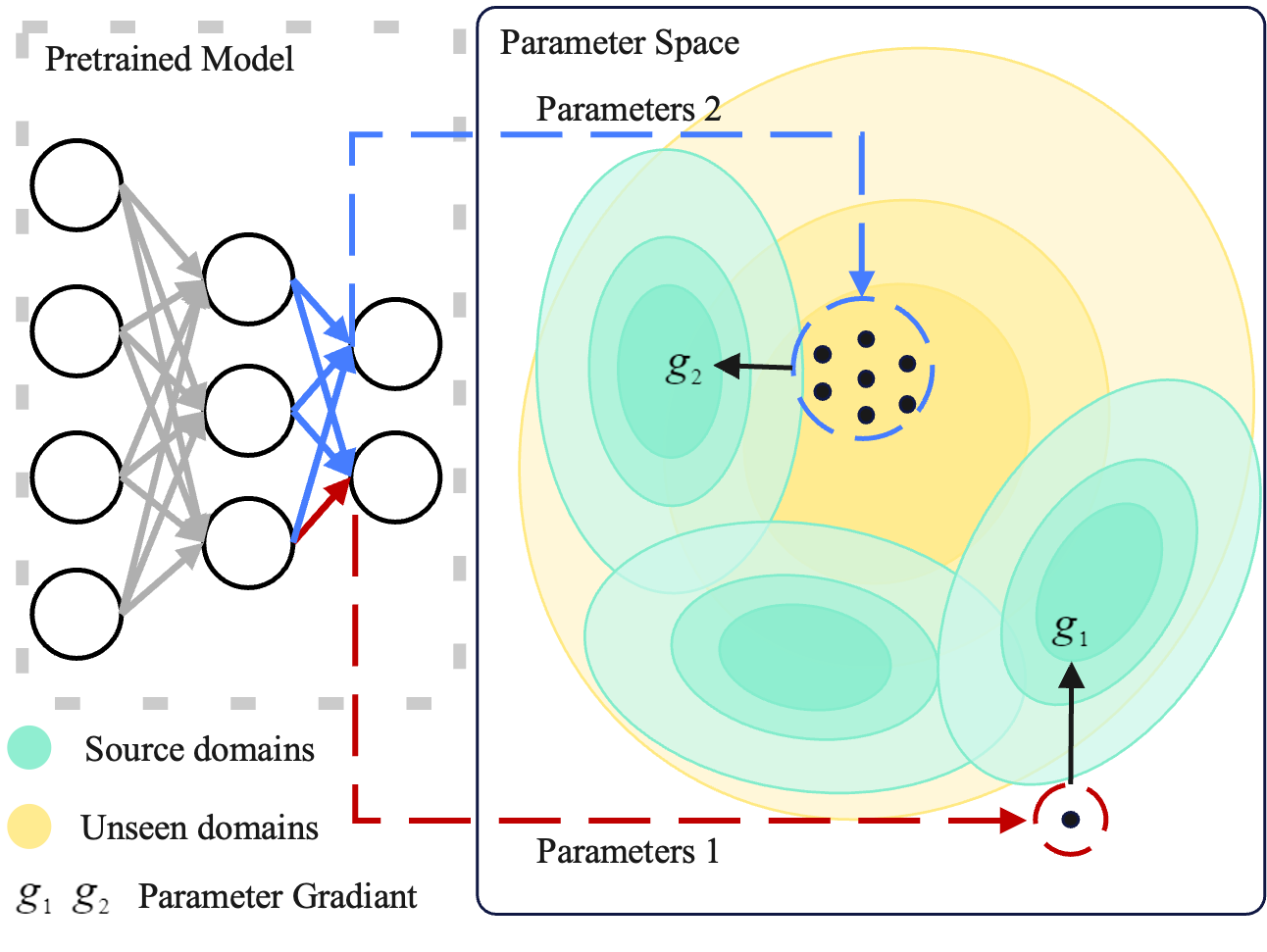}}
		\caption{Illustration of risks in full fine-tuning. When using a pre-trained model, some parameters might already be close to the unseen target domain. Optimizing all parameters might improve performance on source domains but harm generalization.}
		\vspace{-5ex}
		\label{il1}
	\end{center}
\end{figure}

Domain generalization addresses the fundamental challenge posed by distributional discrepancies between training and testing data \cite{9782500}. While the assumption of independent and identically distributed (i.i.d.) data underpins much of machine learning, it is frequently violated in practical applications, resulting in notable performance degradation when models encounter previously unseen domain shifts \cite{zongshu}. In computer vision, such domain shifts may arise from variations in style, texture, background, or camera viewpoint \cite{PACS, VLCS, Office}.

Domain generalization approaches commonly employ fine-tuning strategies on pre-trained backbone models such as CLIP pre-trained ViT\cite{clip}, leveraging the robust initial representations provided by large-scale pre-training \cite{zongshu2,10510661}. These methods typically adapt pre-trained vision models by optimizing them with respect to task-specific feature constraints or specialized losses \cite{10239476,10251546}, with the objective of preserving the generalization capability of the pre-trained models to the downstream task.

Nevertheless, full fine-tuning can undermine the general-purpose knowledge embedded in pre-trained vision models \cite{SPU, PEGO, lora}, potentially altering well-optimized parameters to enhance source domain performance at the expense of generalization. As illustrated in \cref{il1}, certain pre-trained parameters may already be optimally aligned with unseen domains, and indiscriminate updates based on source domain data can inadvertently degrade their utility. Furthermore, the need to retrain fully fine-tuned models for related, yet distinct, tasks imposes considerable computational and communication overhead, which is particularly prohibitive in resource-constrained environments such as satellite platforms or autonomous vehicles \cite{weixing1, zidong1}.

Consequently, selective fine-tuning —wherein only a sparse subset of parameters is updated— has emerged as a promising alternative, balancing adaptation and generalization while minimizing parameter modifications \cite{PM}. Recent advances indicate that sparse parameter optimization not only enhances stability but also yields superior generalization compared to full model updates \cite{SPU, Fed, onthe}. However, the core challenge lies in carefully selecting which parameters to update. Existing methods often rely on gradient magnitude as a criterion, presuming that parameters with the largest gradients are most relevant to the current task. This assumption is questionable in the context of domain generalization, where parameters significant in one domain may be uninformative or even detrimental in others, thereby introducing the risk of overfitting \cite{Fish, Fishr}. This underscores the necessity for parameter selection strategies that explicitly account for domain-specific gradient behavior.

To address these limitations, we propose Joint Parameter Selection (JPS), a novel framework for domain generalization that leverages gradient information across multiple domains to identify parameters with robust generalization potential. Our approach is grounded in rigorous theory, introducing a generalization error upper bound that incorporates update sparsity, thereby elucidating the interaction between sparsity and model generalization. Motivated by this analysis, we propose a two-stage selection mechanism: we choose sparse and small sets of ViT parameters with consistent, significant gradients across source domains in specific layers for updating. 

This adaptive framework affords flexible control over the computational budget by modulating both the degree of sparsity and the choice of layers for fine-tuning, yielding two principal operational regimes:
\begin{enumerate}
	\item Enhanced Generalization with Comparable Cost. By selecting enough layers, we improve generalization while keeping the computational cost similar to full fine-tuning.
	\item Reduced Resource Consumption with Competitive Generalization. By selecting limited layers, we reduce training resources while achieving competitive performance.
\end{enumerate}

We empirically validate our method on the DomainBed benchmark \cite{domainbed}, demonstrating that JPS consistently outperforms state-of-the-art domain generalization and parameter-efficient adaptation methods. In summary, the key contributions of this work are as follows:
\begin{itemize}
	\item We introduce a selective fine-tuning method that updates only a small set of parameters to preserve and leverage the generalization ability of the pre-trained model for domain generalization.
	\item We propose a new generalization upper bound and show that a suitable parameter selection strategy reduces prediction error on unseen domains.
	\item We present a generic implementation with two operators to select and update the parameters with strong generalization potential.
\end{itemize}

\section{Related Work and Background}

\subsection{Large-Scale Pre-trained Vision Models}

Large-scale pre-trained vision models have become foundational in computer vision, offering powerful and transferable feature representations. Early approaches relied on supervised pre-training using large-scale datasets such as ImageNet, with models like ResNet \cite{resnet} and Vision Transformer (ViT) \cite{vit} achieving impressive performance across various tasks. In recent years, self-supervised pre-training methods \cite{chen2020simclr, he2020moco} have emerged, enabling models to learn rich and generalizable features without requiring labeled data. Among these, CLIP \cite{clip} introduced a paradigm shift by jointly training on image-text pairs, resulting in models that understand both visual and semantic concepts. Notably, CLIP-pretrained ViT models have become some of the most widely used and effective backbones for downstream vision tasks due to their strong generalization capabilities and versatility.

\subsection{Domain Generalization}
Domain generalization aims to use multiple source domains \( S = \{S_1, S_2, \dots, S_N\} \) to train a model that performs robustly on unseen target domains \( T, T \cap S = \emptyset \). Each \( S_i \) is a distribution from which data \( (x, y) \) are drawn: \( (x, y) \sim S_i \), where \( x \) and \( y \) are the samples and labels. The objective is to learn a model within the hypothesis space \( h \in \mathcal{H} \) and parameter $\theta \in \Theta$, guided by a loss function \( l(h_{\theta}(x), y) \). The ideal loss function in domain generalization is \cite{irm}:
\begin{equation}
R(h_{\theta},D)=E_{(x,y)\sim D}(l(h_{\theta}(x),y))\ ,D=S\cup T.
\end{equation}
In practice, $T$ and the distribution of $S$ are inaccessible during training \cite{zongshu}. Only the training data from source domains $\hat{S} = \{x_i, y_i\}_{i=1}^n,\hat{S}\subset S$ is available. The empirical loss function in domain generalization is:
\begin{equation}
	R(h_{\theta},\hat{S}) = \frac{1}{n}\sum_{i=1}^n l(h_{\theta}(x_i),y_i)\ ,(x_i, y_i)\in \hat{S}. \label{emp}
\end{equation}

This loss in domain generalization, known as Empirical Risk Minimization (ERM) \cite{ermi}, minimizes task loss on source domains. While ERM is a strong baseline \cite{domainbed}, it does not leverage the diversity within $S$. Current methods address this using different hypotheses or techniques \cite{zongshu, zongshu2}. Some assume invariant features across domains, extractable through feature alignment \cite{irm, CORAL, DANN}, while others use data augmentation, enhancing robustness and generalization \cite{mix, puzzle}. Gradient-based methods like \cite{Fish, Fishr} align gradients across domains, while \cite{SAGM} regulates gradients for smoother solutions. Many also utilize pre-trained models \cite{clip, softprompt} for domain generalization, such as \cite{miro} extracting features, \cite{GES} estimating gradients for unseen domains, and \cite{PEGO} applying LoRA \cite{lora}. Our method also leverages pre-trained knowledge but focuses on selecting parameters to improve generalization.

\begin{figure*}[htbp]
	\centering
	\includegraphics[width=1\textwidth]{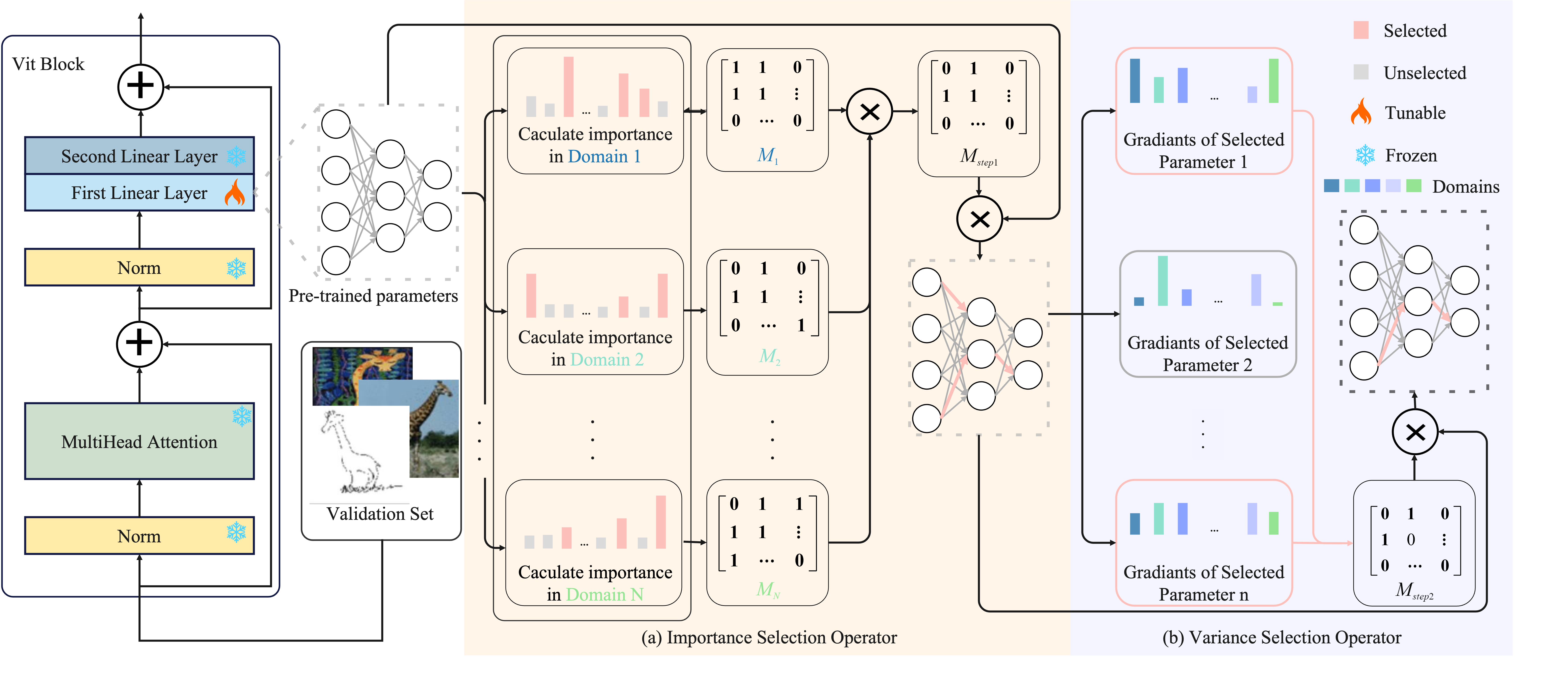}
	\caption{Flowchart of the proposed implementation. We choose the first layer of the ViT module for selecting and updating. The $M^{step2}$ is generated before the update, using gradients from the pre-trained model. In the Importance Selection Operator, we use $M_{step1}$ to select parameters with significant gradients across all domains. In the Variance Selection Operator, we calculate the gradient variance of the remaining parameters across domains and select those with lower variance, resulting in the final mask $M^{step2}$. Once $M^{step2}$ is generated, it remains unchanged.}
	\label{alg}
\end{figure*}

\subsection{Sparse Parameter Updates}
Assume the pre-trained model is $h \in \mathcal{H}$ with parameters $\theta_0 \in \Theta$, where $\theta_0$ has $m$ dimension, we can set the sparse optimization objective as follows \cite{onthe}:
\begin{gather}
	\min_{\nabla\theta,M}R(h_{\theta_0+M\nabla\theta},\hat{S})\nonumber,\\
	s.t. ||M||_0 = [m\rho], M_{ij} = 0 ,M_{ii}\in \{0,1\}. \label{spas}
\end{gather}
Here, $M$ is the mask of size $m \times m$, where $M_{ii}$ indicates whether the $i$-th parameter needs updating. The parameter $\rho$ represents sparsity, showing the proportion of parameters that require updates. The rounding function is denoted by $[\cdot]$. To determine the values of  $M_{ii}$, various methods focus on selecting parameters for updates \cite{nature1}. Some propose fine-tuning specific layers using trainable adapters \cite{lora, adapter, PEGO}, while others rely on gradients to identify significant parameters  \cite{onthe, Fed, SPU, PU}. Our method integrates these strategies, selecting parameters from specific layers while refining the process for different distributions in the source domains.

\section{Joint Parameter Selection\label{JPS}}

The proposed method includes a theoretical analysis and an implementation process. \cref{JPS1} introduces the upper bound using sparse updates to relate training and test set prediction losses, along with corresponding propositions to guide our implementation. Building on this analysis, \cref{JPS2} focuses on the practical implementation of the JPS algorithm, starting with an overview of the process and then a detailed examination of the implementation specifics.

\subsection{Bounding Generalization Error with Sparsity Constraints}
\label{JPS1}
We define the sparse learning method \( A \), where \( A(\theta, S) \) represents the set of model parameters \( \theta = \theta_0 + M\nabla\theta \), learned from the data \( S \) to minimize certain optimization goals. For convenience, we define the loss function of the model learned by $A$ as:
$R(A(\theta, S), S) = E_{(x,y) \sim S} l(h_{A(\theta, S)}(x), y)$. We investigate the relationship between $R(A(\theta,\hat{S}), \hat{S})$ and $R(A(\theta,\hat{S}), T)$. We first present a lemma and Theorem to illustrate the relationship between sparse updates and model loss. We then derive two propositions to show how parameter selection can reduce prediction loss.

\begin{lemma}\label{lemma3.1}
Assume that the loss function \( R \) exhibits \( p \)-Lipschitz continuity and Pointwise Hypothesis Stability \( c \) \cite{stable} for method \( A \). Set \( |\nabla_{\theta} R(A(\theta, S), S)| \geq C_{min}^{S} \cdot I \). Let \( \beta = \max(p, c) \), we have:
\begin{align}
		E_{\hat{S},i\sim U(n)}&(|l(h_{A(\theta,\hat{S})}(x_i),y_i)-l(h_{A(\theta,\hat{S}^{/i})}(x_i),y_i)|)\leq \mathcal{K}_1,\nonumber\\
		&\mathcal{K}_1 = \frac{\beta^2}{(C_{min}^{\hat{S}}+2(1-\rho))n}, (x_i,y_i)\in \hat{S}.
\end{align}
\end{lemma}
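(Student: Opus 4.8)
The plan is to adapt the classical pointwise-hypothesis-stability argument \cite{stable} to the sparse update $\theta = \theta_0 + M\nabla\theta$, so that the sparsity level $\rho$ enters the effective curvature constant. Writing $\theta_{\hat{S}} = A(\theta,\hat{S})$ and $\theta_{\hat{S}^{/i}} = A(\theta,\hat{S}^{/i})$ for the sparse minimizers trained with and without the $i$-th sample, I would first invoke $p$-Lipschitz continuity of the loss to reduce the target to a perturbation bound on the learned parameters,
\begin{equation*}
|l(h_{\theta_{\hat{S}}}(x_i),y_i) - l(h_{\theta_{\hat{S}^{/i}}}(x_i),y_i)| \le p\,\|\theta_{\hat{S}} - \theta_{\hat{S}^{/i}}\|.
\end{equation*}
The remaining task is then to bound $\|\theta_{\hat{S}} - \theta_{\hat{S}^{/i}}\|$ by $c/\big((C_{min}^{\hat{S}}+2(1-\rho))n\big)$; since $pc \le \beta^2$ by the definition $\beta = \max(p,c)$, multiplying the two factors produces exactly $\mathcal{K}_1$.

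To obtain the perturbation bound I would compare the two sparse problems directly. Their objectives $R(h_\theta,\hat{S})$ and $R(h_\theta,\hat{S}^{/i})$ differ only by the single removed term $\tfrac{1}{n}l(h_\theta(x_i),y_i)$, whose contribution to the gradient on the active subspace selected by $M$ is controlled by the stability constant $c$ and scales as $1/n$. Subtracting the constrained stationarity conditions of the two minimizers and pairing the resulting identity with the displacement $\theta_{\hat{S}} - \theta_{\hat{S}^{/i}}$, I would invoke the gradient lower bound $|\nabla_\theta R(A(\theta,S),S)| \ge C_{min}^{S}\cdot I$ to supply an inherent curvature of at least $C_{min}^{\hat{S}}$ along the perturbation direction, which turns the stationarity identity into a quadratic lower bound of strong-convexity type and hence into the desired $1/n$ control on $\|\theta_{\hat{S}} - \theta_{\hat{S}^{/i}}\|$.

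The step I expect to be the main obstacle is justifying the additive term $2(1-\rho)$ in the effective curvature, i.e. explaining precisely how the sparsity constraint sharpens the problem. The intuition I would make rigorous is that freezing the $(1-\rho)m$ unselected coordinates at their pre-trained values acts as a hard implicit regularizer: the displacement is confined to the $[\rho m]$-dimensional active subspace, so the restricted objective is more sharply curved than its full counterpart and the frozen fraction $(1-\rho)$ augments the inherent curvature $C_{min}^{\hat{S}}$. Quantifying this contribution as exactly $2(1-\rho)$, rather than some other increasing function of the frozen fraction, is the delicate point; I would derive it by decomposing the displacement into active and inactive blocks and tracking how the block held fixed at $\theta_0$ lower-bounds the quadratic form that governs the minimizer displacement. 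Once this curvature constant is in hand, combining it with the Lipschitz reduction and averaging over $\hat{S}$ and $i\sim U(n)$ — which is exactly the expectation appearing in the definition of pointwise hypothesis stability — yields the stated bound $\mathcal{K}_1$.
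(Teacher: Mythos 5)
Your overall skeleton does match the paper's argument: reduce the loss difference to a parameter-perturbation bound via $p$-Lipschitzness, bound $\|A(\theta,\hat{S})-A(\theta,\hat{S}^{/i})\|$ through a leave-one-out comparison in which the objectives differ by a single $\tfrac{1}{n}l(h_\theta(x_i),y_i)$ term, and recover $\beta^2$ from the product $pc\le\beta^2$. However, the step you yourself flag as the main obstacle --- where the additive $2(1-\rho)$ comes from --- is a genuine gap, and the mechanism you propose for closing it would not work. Under the hard constraint that the unselected coordinates are frozen at $\theta_0$, \emph{both} minimizers $A(\theta,\hat{S})$ and $A(\theta,\hat{S}^{/i})$ agree exactly on the frozen block, so the displacement between them has zero component there; decomposing into active and inactive blocks therefore yields no contribution at all from the frozen coordinates, rather than an additive $(1-\rho)$. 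The frozen block cannot "augment the curvature" along a direction in which it does not move.

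The paper's route is different: it first replaces the hard constraint by its Lagrangian relaxation, $f_{\hat{S}}(\theta)=R(h_\theta,\hat{S})+\lambda\|(I-M)(\theta-\theta_0)\|^2$, so that the penalty's gradient $2\lambda(I-M)(\theta-\theta_0)$ adds $2\lambda$ to the gradient lower bound, giving the linear-growth estimate $f_{\hat{S}}(\theta)-f_{\hat{S}}(A(\theta,\hat{S}))\ge (C_{min}^{\hat{S}}+2\lambda)\|\theta-A(\theta,\hat{S})\|$ via a first-order Taylor expansion; the multiplier is then identified as $\lambda=1-\rho$ by computing $E\|(I-M)(\theta-\theta_0)\|^2=\sum_i(\theta^i-\theta^i_0)^2(1-\rho)$ for a mask of sparsity $\rho$. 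Without this penalty reformulation, the $2(1-\rho)$ simply has no source. Two smaller discrepancies in your plan: (i) the assumption $|\nabla_\theta R(A(\theta,S),S)|\ge C_{min}^{S}\cdot I$ is a lower bound on the gradient magnitude at the learned point and is used by the paper for \emph{linear} growth of the objective, not as a Hessian/strong-convexity bound as you invoke it; (ii) pointwise hypothesis stability $c$ bounds the expected difference of losses of the two learned hypotheses (the right-hand side after the function-value comparison), not the gradient contribution of the removed sample, so your phrase "controlled by the stability constant $c$" attaches $c$ to the wrong object, even though the final multiplication $p\cdot c\le\beta^2$ lands in the right place.
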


The proof are shown in Appendix \cref{lemma3.1_proof}. Here, \( U(\cdot) \) represents the uniform distribution. The sparse updates compress the pointwise stability from $c$ to $\frac{\beta^2}{(C_{min}^{S}+2(1-\rho))n}$, demonstrating that sparse updates can enhance the stability. Next, using \cref{lemma3.1} and the $\mathcal{H}$-divergence \cite{hsandu}, we derive the loss relationship between the $R(A(\theta,\hat{S}), \hat{S})$ and $R(A(\theta,\hat{S}), T)$: 

\begin{theorem}\label{theorem3_2}
Assume $|\nabla_{\theta} R(A(\theta, S), S_i)| \geq C_{\min}^{S_i}$. Under the same assumption as \cref{lemma3.1}, we have the following inequality with probability $1 - \sigma$ for some constant $C$:
\begin{multline}\label{th1}
	R(A(\theta,\hat{S}),T)\leq R(A(\theta,\hat{S}),\hat{S})\\
	+\sqrt{\frac{C^2+12Cn\mathcal{K}_2}{2n\sigma}}+\frac{1}{2}\mathcal{H}\nabla \mathcal{H}(S,T),
\end{multline}
\begin{equation}
		\mathcal{K}_2 \!= (\frac{1}{N}\!\sum_{i=1}^N\frac{\beta^2}{(C_{min}^{S^i}+2(1-\rho))n})+\max\limits_{i}\frac{1}{2}\mathcal{H}\nabla \mathcal{H}(\hat{S}_i,\hat{S}).	\nonumber
\end{equation}
\end{theorem}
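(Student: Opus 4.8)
The plan is to split the source-to-target gap into a domain-shift component, controlled by $\mathcal{H}\nabla\mathcal{H}$-divergence theory, and an empirical-to-population generalization component, controlled by the stability estimate of \cref{lemma3.1}. First I would insert the population source risk $R(A(\theta,\hat{S}),S)$ and decompose
\[
R(A(\theta,\hat{S}),T) - R(A(\theta,\hat{S}),\hat{S}) = \big[R(A(\theta,\hat{S}),T) - R(A(\theta,\hat{S}),S)\big] + \big[R(A(\theta,\hat{S}),S) - R(A(\theta,\hat{S}),\hat{S})\big],
\]
bounding the two brackets separately. The first bracket is the cross-domain discrepancy between the source mixture $S$ and the target $T$; invoking the $\mathcal{H}\nabla\mathcal{H}$-divergence bound \cite{hsandu} gives $R(A(\theta,\hat{S}),T) - R(A(\theta,\hat{S}),S) \le \tfrac{1}{2}\mathcal{H}\nabla\mathcal{H}(S,T)$ up to the ideal joint risk, producing exactly the last term of \eqref{th1}.

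For the second bracket I would use the stability-to-generalization route \cite{stable}: bound the second moment $E_{\hat{S}}\big[(R(A(\theta,\hat{S}),S) - R(A(\theta,\hat{S}),\hat{S}))^2\big]$ by $\tfrac{C^2}{2n} + 6C\,\mathcal{K}_2$, where $C$ bounds the loss, and then apply Markov's inequality to $(R-R_{emp})^2$ so that with probability $1-\sigma$ the gap is at most $\sqrt{(C^2 + 12Cn\mathcal{K}_2)/(2n\sigma)}$, matching the middle term of \eqref{th1}. The effective stability $\mathcal{K}_2$ is then assembled in two pieces: (i) applying \cref{lemma3.1} to each source domain $S_i$ — with $C_{min}^{S_i}$ in place of $C_{min}^{\hat{S}}$ — and averaging the resulting pointwise-stability bounds over the $N$ domains gives the term $\tfrac{1}{N}\sum_i \beta^2/\big((C_{min}^{S^i}+2(1-\rho))n\big)$; (ii) the discrepancy between the pooled empirical risk $R(A(\theta,\hat{S}),\hat{S})$ and the per-domain population risks whose average is $R(A(\theta,\hat{S}),S)$ is absorbed into $\max_i \tfrac{1}{2}\mathcal{H}\nabla\mathcal{H}(\hat{S}_i,\hat{S})$.

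The main obstacle will be part (ii) of the second-moment computation. \cref{lemma3.1} supplies a per-domain leave-one-out stability estimate, yet the empirical risk in the theorem is pooled over all source samples while the population risk $R(A(\theta,\hat{S}),S)$ is a mixture across domains. Reconciling these requires routing each sample's leave-one-out perturbation through the pooled estimator and controlling the cross-domain mismatch via the $\mathcal{H}\nabla\mathcal{H}(\hat{S}_i,\hat{S})$ terms; the delicate point is verifying that these intra-source divergences enter $\mathcal{K}_2$ additively rather than compounding with the stability terms, so that the second moment stays of the form $\tfrac{C^2}{2n}+6C\mathcal{K}_2$. Once this bound is in place, combining the Markov estimate for the generalization gap with the domain-shift bound through the decomposition above yields the stated inequality with probability $1-\sigma$.
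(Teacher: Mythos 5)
Your proposal follows essentially the same route as the paper: the same insertion of the population source risk $R(A(\theta,\hat{S}),S)$, the same $\mathcal{H}\nabla\mathcal{H}$ bound for the source-to-target gap, and the same stability-to-generalization step (Theorem 11 of \cite{stable}) for the empirical-to-population gap, with $\mathcal{K}_2$ assembled from per-domain applications of \cref{lemma3.1} plus the intra-source divergence terms. The ``delicate point'' you flag is handled in the paper exactly along the lines you sketch: the leave-one-out perturbation is routed through the per-domain models $A(\theta,\hat{S}_j)$ by a triangle inequality, so the averaged per-domain stability terms and $\max_i\frac{1}{2}\mathcal{H}\nabla\mathcal{H}(\hat{S}_i,\hat{S})$ enter the effective pointwise stability additively, after which the paper invokes the cited stability theorem as a black box rather than re-deriving its second-moment/Chebyshev argument as you propose.
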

In this context, $n$ represents the data points in $\hat{S}$, and $\rho$ refers to the model sparsity. $\mathcal{H}\nabla \mathcal{H}(\cdot,\cdot)$ denotes the $\mathcal{H}$-divergence, a widely used measure in domain generalization used to quantify the prediction discrepancy between two distributions. The better the generalization ability of the model, the smaller the value of $\mathcal{H}\nabla \mathcal{H}(\cdot,\cdot)$.
\begin{proof}
\label{pf32}
First, we need to introduce two Theorems that will be used in the proof:
\begin{theorem}
	\label{th2}
	We set $\mathcal{H}\nabla \mathcal{H}$ hypothesis space as :
	$g(x)\in \mathcal{H}\nabla \mathcal{H} \Rightarrow g(x)=h(x)\oplus h'(x)$, for some $h,h'\in \mathcal{H}$,$\oplus$ means the XOR operation.
	And we can set the $d_{\mathcal{H}\nabla \mathcal{H}}(S,T)$ as:
	\begin{multline}
		d_{\mathcal{H}\nabla \mathcal{H}}(S,T)=\\
		2\sup_{h,h'\in \mathcal{H}}|P_{x\sim S}(h(x)\neq h'(x))-P_{x\sim T}(h(x)\neq h'(x))|
	\end{multline}
	then the inequality holds: 
	\begin{equation}
		R_T(h)\leq R_S(h)+\frac{1}{2}d_{\mathcal{H}\nabla \mathcal{H}}(S,T)+\lambda_{\mathcal{H}}. \label{dh} 
	\end{equation}
\end{theorem}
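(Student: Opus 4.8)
The plan is to prove \cref{th2} via the classical three-step decomposition underlying $\mathcal{H}$-divergence domain-adaptation bounds. Throughout I read $R_D(h)=P_{x\sim D}(h(x)\neq f_D(x))$ as the expected $0$-$1$ risk against the domain's labeling function $f_D$, and write $R_D(h,h')=P_{x\sim D}(h(x)\neq h'(x))$ for the disagreement between two hypotheses under $D$. The one ingredient not spelled out in the statement is the \emph{ideal joint hypothesis} $h^\star=\arg\min_{h\in\mathcal{H}}\bigl(R_S(h)+R_T(h)\bigr)$, with combined error $\lambda_{\mathcal{H}}=R_S(h^\star)+R_T(h^\star)$; I would introduce it at the outset, since $\lambda_{\mathcal{H}}$ appears in \eqref{dh} but is left undefined in the statement.

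First I would record that the disagreement $R_D(\cdot,\cdot)$ obeys the triangle inequality for each fixed $D$, because $\mathbf{1}[f\neq g]\le \mathbf{1}[f\neq h^\star]+\mathbf{1}[h^\star\neq g]$ holds pointwise. Pivoting on $h^\star$ in the target domain then gives $R_T(h)=R_T(h,f_T)\le R_T(h,h^\star)+R_T(h^\star,f_T)=R_T(h,h^\star)+R_T(h^\star)$.

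Second, I would transfer the target disagreement to the source via $R_T(h,h^\star)\le R_S(h,h^\star)+\lvert R_T(h,h^\star)-R_S(h,h^\star)\rvert$. The crucial observation is that $h,h^\star\in\mathcal{H}$, so the event $\{h(x)\neq h^\star(x)\}$ is precisely the region where the XOR classifier $h\oplus h^\star\in\mathcal{H}\nabla\mathcal{H}$ fires; hence the supremum defining $d_{\mathcal{H}\nabla\mathcal{H}}(S,T)$ dominates this particular pair, giving $\lvert R_T(h,h^\star)-R_S(h,h^\star)\rvert\le \tfrac12 d_{\mathcal{H}\nabla\mathcal{H}}(S,T)$, with the factor $2$ in the definition cancelling the $\tfrac12$. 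A second triangle inequality on the source yields $R_S(h,h^\star)\le R_S(h)+R_S(h^\star)$.

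Collecting the three bounds gives $R_T(h)\le R_S(h)+\tfrac12 d_{\mathcal{H}\nabla\mathcal{H}}(S,T)+\bigl(R_S(h^\star)+R_T(h^\star)\bigr)$, and identifying the last bracket with $\lambda_{\mathcal{H}}$ recovers \eqref{dh}. The step I expect to require the most care is the second: one must check that descending from $d_{\mathcal{H}\nabla\mathcal{H}}$ — a supremum over \emph{all} pairs in $\mathcal{H}$ — to the specific pair $(h,h^\star)$ is legitimate, and that the ``probability of disagreement'' language in the definition of $d_{\mathcal{H}\nabla\mathcal{H}}$ is correctly aligned with the $0$-$1$ risk language used for $R_D$. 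The remaining steps are just the pointwise triangle inequality applied twice, together with the definition of the ideal joint hypothesis.
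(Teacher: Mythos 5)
Your proof is correct, and it is the classical argument for this bound (the Ben-David et al. $\mathcal{H}\nabla\mathcal{H}$-divergence theory): introduce the ideal joint hypothesis $h^\star$ with $\lambda_{\mathcal{H}}=R_S(h^\star)+R_T(h^\star)$, apply the triangle inequality for the disagreement pseudo-metric once on the target and once on the source, and bound $|R_T(h,h^\star)-R_S(h,h^\star)|$ by $\tfrac{1}{2}d_{\mathcal{H}\nabla\mathcal{H}}(S,T)$ because the supremum in the definition dominates the particular pair $(h,h^\star)$, with the factor $2$ cancelling. Be aware, though, that the paper itself gives no proof of this theorem: it is stated as an imported known result, used as an ingredient in the proof of \cref{theorem3_2}, accompanied only by the informal remark that $\lambda_{\mathcal{H}}$ is independent of the specific $h$ and relates to the function $h^*_{S,T}$ minimizing the loss on both $S$ and $T$. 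Your write-up therefore supplies exactly the argument the paper delegates to citation, and your observation that $\lambda_{\mathcal{H}}$ is left undefined in the statement and must be pinned down as the combined error of the ideal joint hypothesis is the right correction to make.
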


Here, $\lambda$ is related to $S$, $T$, and $\mathcal{H}$, but independent of the specific $h$. $d_{\mathcal{H}\nabla \mathcal{H}}(S,T)$ represents the comparison between the current function $h$ and the function $h^*_{S,T}$ that minimizes the loss $R$ on both $S$ and $T$. For simplicity, we simplify $\frac{1}{2}d_{\mathcal{H}\nabla \mathcal{H}}(S,T) + \lambda_{\mathcal{H}}$ as $\frac{1}{2}\mathcal{H}\nabla\mathcal{H}(S,T)$ in the main text.

\begin{theorem}
	\label{th3}
	(Theorem 11 in \cite{stable}) For any learning algorithm $A$ with pointwise hypothesis stability $\beta$ with respect to a loss function such that $0\leq c(y,y')\leq C$, we have with probability $1-\sigma$:
	\begin{equation}
		R(A(\theta,S),S)\leq R(A(\theta,\hat{S}),\hat{S})+\sqrt{\frac{C^2+12Cn\beta}{2n\sigma}}.
	\end{equation}
	where, $c(y, y') = |y -y'|$ is an absolute loss function.
\end{theorem}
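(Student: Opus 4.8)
The plan is to recover the classical stability-based generalization bound of \cite{stable} by controlling the second moment of the generalization gap and then invoking a Chebyshev/Markov-type tail inequality. Throughout, I treat $\hat{S}=(z_1,\dots,z_n)$ as $n$ i.i.d.\ samples $z_i=(x_i,y_i)$, write $A(\theta,\hat{S})$ for the hypothesis returned on this sample, and abbreviate $R:=R(A(\theta,\hat{S}),S)$ (the population/true risk) and $R_{\mathrm{emp}}:=R(A(\theta,\hat{S}),\hat{S})$ (the empirical risk), so that the generalization gap is
\[
\Phi(\hat{S}) = R - R_{\mathrm{emp}},
\]
and the claimed inequality reads $\Phi(\hat{S})\le\sqrt{(C^2+12Cn\beta)/(2n\sigma)}$ with probability $1-\sigma$. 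Since the absolute loss satisfies $0\le c(y,y')\le C$, I apply the one-sided bound $\Pr[\Phi\ge\epsilon]\le\Pr[\Phi^2\ge\epsilon^2]\le E_{\hat{S}}[\Phi^2]/\epsilon^2$ and set $\epsilon=\sqrt{E_{\hat{S}}[\Phi^2]/\sigma}$; this produces exactly the stated form, so the whole task reduces to proving the second-moment estimate $E_{\hat{S}}[\Phi(\hat{S})^2]\le (C^2+12Cn\beta)/(2n)=C^2/(2n)+6C\beta$.

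To bound the second moment I would expand $\Phi^2=R^2-2R\,R_{\mathrm{emp}}+R_{\mathrm{emp}}^2$ and evaluate each expectation by the standard symmetrization/renaming device. Writing the test expectation in $R$ over a fresh independent point and using exchangeability of the i.i.d.\ sample, I relate the loss of $A(\theta,\hat{S})$ to the loss of a one-point perturbation: either the leave-one-out learner $A(\theta,\hat{S}^{/i})$ or the replace-one learner $A(\theta,\hat{S}^{i})$ in which position $i$ is overwritten by an independent copy. Every such swap changes a loss term by at most $C$ pointwise and, crucially, by at most $\beta$ in expectation thanks to pointwise hypothesis stability, $E_{\hat{S}}[|\ell(A(\theta,\hat{S}),z_i)-\ell(A(\theta,\hat{S}^{/i}),z_i)|]\le\beta$. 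The purely i.i.d.\ fluctuation of the empirical mean $R_{\mathrm{emp}}$ contributes the leading $C^2/(2n)$ term, while the cross terms—each arising from comparing a loss on $\hat{S}$ against the corresponding loss on a one-point perturbation of $\hat{S}$—are controlled by a bounded factor ($\le C$) times a stability factor ($\le\beta$), and a careful accounting of how many such swaps appear yields the constant giving the $6C\beta$ contribution.

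The main obstacle will be the bookkeeping in this second-moment expansion: once the square is written out, the expectation couples two data indices $i,j$ simultaneously, so I must track how perturbing the $i$-th point affects a loss term evaluated at the $j$-th point and show that double-perturbation remainders stay of order $C\beta$ rather than accumulating. The clean way to organise this is to pass through the replace-one coupling $\hat{S}\mapsto\hat{S}^{i}$ uniformly over $i$, bound each elementary difference $|\ell(A(\theta,\hat{S}),z)-\ell(A(\theta,\hat{S}^{i}),z)|$ by combining $0\le\ell\le C$ with the stability inequality in expectation, and then sum the contributions; pinning every cross term to the $C^2/(2n)$ or the $C\beta$ scale and collecting them gives exactly $C^2/(2n)+6C\beta$. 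Substituting this estimate into the Markov step and solving $E_{\hat{S}}[\Phi^2]/\epsilon^2=\sigma$ then delivers the stated high-probability bound. This is precisely Theorem 11 of \cite{stable}, which we invoke directly when proving \cref{theorem3_2}.
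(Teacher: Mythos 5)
The paper offers no proof of this statement at all---it is imported verbatim as Theorem~11 of \cite{stable}---and your outline correctly reconstructs that reference's own argument: reduce to the second-moment bound $E_{\hat{S}}[\Phi^2]\le C^2/(2n)+6C\beta$ and apply the Markov/Chebyshev step, with constants matching the stated $\sqrt{(C^2+12Cn\beta)/(2n\sigma)}$ exactly. The one piece you defer, the cross-term bookkeeping that yields $6C\beta$, is precisely Lemma~9 of that reference, and your plan for it (replace-one/leave-one-out coupling, bounding each swapped loss term by $C$ pointwise and by $\beta$ in expectation, invoking pointwise stability only at the perturbed index) is the standard and correct way to carry it out, so your proposal is sound and coincides with the source the paper cites rather than diverging from it.
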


Next, we will extend \cref{lemma3.1} using domain label information. Suppose $\hat{S} = {\hat{S}_1, \hat{S}_2, ..., \hat{S}_n}$, the following inequality holds:
\begin{multline}
	l(h_{A(\theta,\hat{S})}(x_i),y_i)-l(h_{A(\theta,\hat{S}^{/i})}(x_i),y_i)|
	\\ \leq |l(h_{A(\theta,\hat{S})}(x_i),y_i)-l(h_{A(\theta,\hat{S}_j)}(x_i),y_i)|\\
	+|l(h_{A(\theta,\hat{S}_j)}(x_i),y_i)-l(h_{A(\theta,\hat{S}^{/i})}(x_i),y_i)|. \label{pf7}
\end{multline}
The first term here can be substituted into the \cref{pf6}. At this point, we only consider the samples in $\hat{S}_j$. Therefore, we simply need to replace the maximum value from $C{min}^{\hat{S}}$ to $C_{min}^{\hat{S}_j}$ to obtain a result similar to \cref{lemma3.1}:
\begin{multline}
	E_{\hat{S}_j,i\sim U(n)}|l(h_{A(\theta,\hat{S})}(x_i),y_i)-l(h_{A(\theta,\hat{S}_j)}(x_i),y_i)|\\
	\leq \frac{\beta^2}{n(C_{min}^{\hat{S}_j}+2\lambda)}, (x_i,y_i)\in \hat{S}_j.
\end{multline}
For the second term in \cref{pf7}, we can see that its expectation is a variant of the $\mathcal{H}$-divergence:
\begin{multline}
	E_{\hat{S}_j,i\sim U(n)}(|l(h_{A(\theta,\hat{S}_j)}(x_i),y_i)-l(h_{A(\theta,\hat{S}^{/i})}(x_i),y_i)|)\\
	\leq \frac{1}{2}\mathcal{H}\nabla\mathcal{H}(\hat{S}^{/i},\hat{S}_j)\leq \frac{1}{2}\mathcal{H}\nabla\mathcal{H}(\hat{S},\hat{S}_j).
\end{multline}
 The above inequality holds for any $j$, but the samples may not come from $\hat{S}_j$. Therefore, we need to take the average over $j$, yielding:
\begin{multline}
	E_{\hat{S},i\sim U(n)}|l(h_{A(\theta,\hat{S})}(x_i),y_i)-l(h_{A(\theta,\hat{S}^{/i})}(x_i),y_i)|\\
	\leq E_{\hat{S},i\sim U(n)}I(x_i\in \hat{S}_j)(\frac{1}{N}\sum_{j=1}^N \frac{\beta^2}{n(C_{min}^{\hat{S}_j}+2\lambda)}\\
	+\frac{1}{n}\sum_{j=1}^N \mathcal{H}\nabla\mathcal{H}(\hat{S},\hat{S}_j))\\
	\leq \frac{1}{N}\sum_{j=1}^N \frac{\beta^2}{n(C_{min}^{\hat{S}_j}+2\lambda)}+\max_j \frac{1}{2}\mathcal{H}\nabla\mathcal{H}(\hat{S},\hat{S}_j).
\end{multline}
At this point, let the right-hand side of the above inequality be $\frac{\mathcal{K_2}}{n}$. Using \cref{th2}, we can obtain the following equation with a confidence of $1 - \sigma$ with some fixed $C$:
\begin{equation}
	R(A(\theta,S),S)\leq R(A(\theta,\hat{S}),\hat{S})+\sqrt{\frac{C^2+12Cn\mathcal{K}_2}{2n\sigma}}.
\end{equation}
By applying the $H$ divergence formula once more and substituting $\lambda = 1 - \rho$, we can obtain our \cref{theorem3_2}:
\begin{multline}
	R(A(\theta,\hat{S}),T)\leq R(A(\theta,\hat{S}),\hat{S})\\
	+\sqrt{\frac{C^2+12Cn\mathcal{K}_2}{2n\sigma}}+\frac{1}{2}\mathcal{H}\nabla \mathcal{H}(S,T),
\end{multline}
\begin{equation}
	\mathcal{K}_2 = \frac{1}{N}\sum_{i=1}^N\frac{\beta^2}{(C_{min}^{S^i}+2(1-\rho))n}+\max\limits_{i}\frac{1}{2}\mathcal{H}\nabla \mathcal{H}(\hat{S}_i,\hat{S}). \nonumber
\end{equation}

\end{proof}

\textbf{Motivation.} The inequality shows that the loss in an unknown target domain depends on the sparsity of the training model, the minimum absolute gradient, and the $\mathcal{H}$-divergence. Direct optimization of $\mathcal{H}\nabla \mathcal{H}(S, T)$ is difficult due to the unavailability of the target domain. However, $\mathcal{K}_2$ is optimizable in \cref{th1}. Notably, $\mathcal{K}_1$ can replace $\mathcal{K}_2$ in \cref{th1}, yielding another inequality that reflects the selection strategy used by most existing models \cite{onthe, Fed, SPU}. As $C_{min}^S$ increases and $\rho$ approaches zero, $\mathcal{K}_1$ decreases, suggesting that updating the model by selecting the largest gradients can satisfy this condition.
\begin{proof}
	Notably, Using \cref{th3} directly with \cref{lemma3.1}, we can get an inequality with $\beta = \mathcal{K}_1$. Then using \cref{th2}, we can get the result. The process is similar to the proof procedure of $\mathcal{K}_2$ mentioned above, but more straightforward, so we will not elaborate further. Notably, $\mathcal{K}_1$ treats $S$ as an I.I.D distribution, which fits the previous work assumption. However, the distribution of source domain is not I.I.D.
\end{proof}

However, the formulation of $\mathcal{K}_1$ treats $S$ as an I.I.D dataset, which may not be suitable for domain generalization. $\mathcal{K}_2$ includes terms for the minimum gradients of each domain and the prediction discrepancies between domains. Based on the expression of $\mathcal{K}_2$, we redesign the parameter selection strategy:

\begin{figure}[tbp]
\begin{center}
	\centerline{\includegraphics[width=\columnwidth]{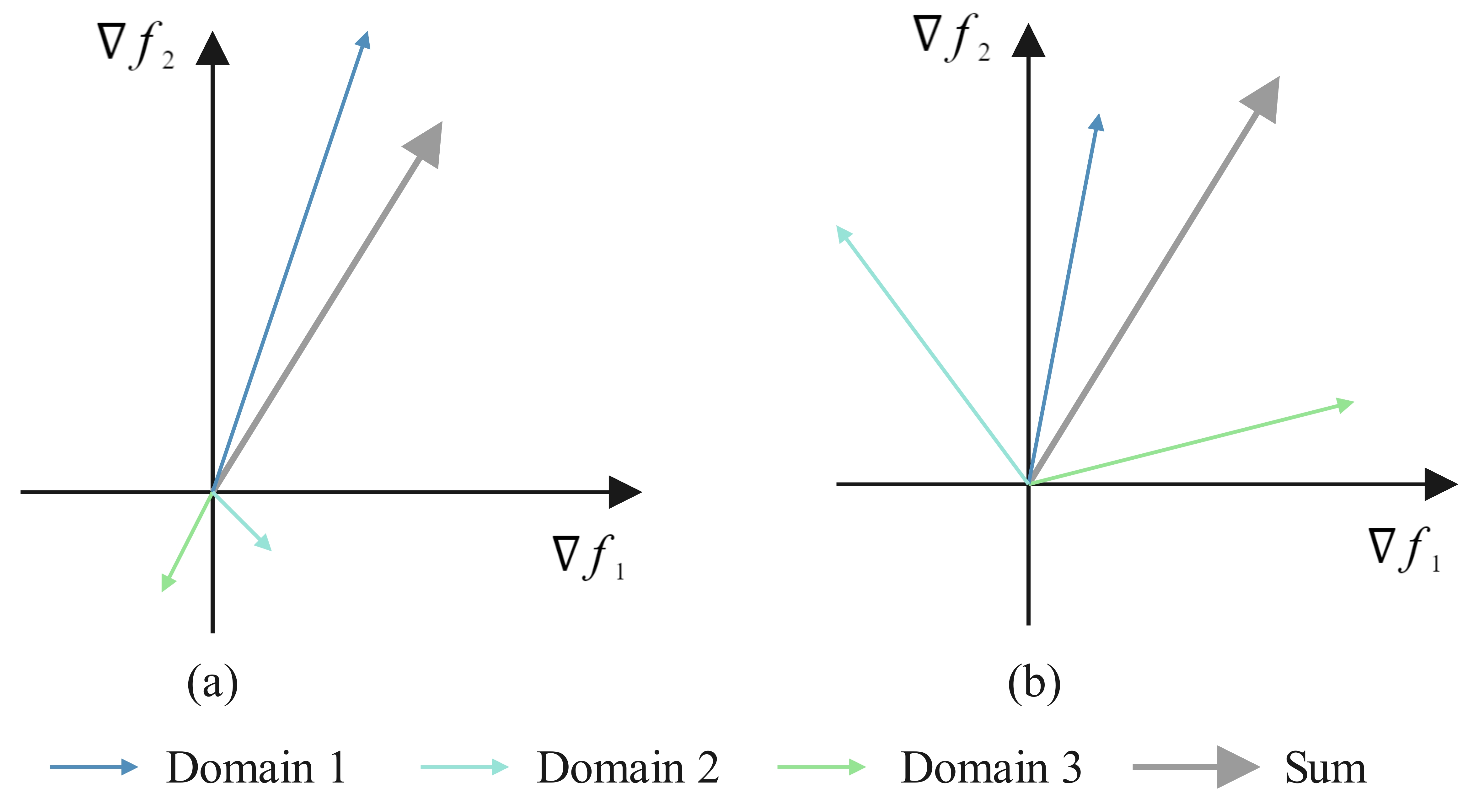}}
	\caption{The image illustrates the gradient of the parameters we aim to filter out. In part (a), the parameters have significant gradients in only one domain; in part (b), the parameters are significant in all domains but in different directions.}
	\label{il2}
\end{center}
\end{figure}

\subsubsection{Select Parameters with Significance Across All Domains}
The first term of $\mathcal{K}_2$ relates to minimum absolute gradients across domains. Therefore, we should consider  $C_{min}^{S_i}$ of each domain for selecting instead of their sum. We prove it using a proposition:
\begin{proposition}\label{prop3.3}
Let $G_i^j = \nabla_{\theta_j} R(A(\theta,S),S_i)$, where $\theta_j$ represents the $j$ parameter in $\theta$. If
$$\hat{M}_{jj}:=I[(\sum_{j'=1}^m\prod_{i=1}^N I(|G_i^j|\geq |G_i^{j'}|)\geq m -[m\rho])],$$
where $\hat{C}_{min}^{S^i}$ is obtained by $\hat{M}$, while $C_{min}^{S^i}$ is obtained through an arbitrary selection method. For some fixed $\rho$, we have:
\begin{equation}
\sum_{i=1}^N\frac{\beta^2}{\hat{C}_{min}^{{S}^i}+2(1-\rho)}\leq \sum_{i=1}^N\frac{\beta^2}{C_{min}^{{S}^i}+2(1-\rho)}.
\end{equation}
\end{proposition}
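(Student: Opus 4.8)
The plan is to exploit that the map $\phi(x) := \dfrac{\beta^2}{x + 2(1-\rho)}$ is strictly positive and monotonically decreasing on the relevant range, so the asserted inequality $\sum_i \phi(\hat{C}_{min}^{S^i}) \le \sum_i \phi(C_{min}^{S^i})$ follows once I control how the per-domain minima $\hat{C}_{min}^{S^i} = \min_{j:\hat{M}_{jj}=1}|G_i^j|$ compare with those of an arbitrary selection. The cleanest sufficient condition is the termwise bound $\hat{C}_{min}^{S^i}\ge C_{min}^{S^i}$ for every domain $i$; applying $\phi$ entrywise and summing then closes the proof, so that $\beta$ and $\rho$ play no further role and the whole statement reduces to a purely combinatorial claim about gradient magnitudes.

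First I would unpack the selection rule. The inner product $\prod_{i=1}^N I(|G_i^j|\ge|G_i^{j'}|)$ equals $1$ exactly when parameter $j$ weakly Pareto-dominates $j'$, i.e. carries at least as large an absolute gradient as $j'$ in every domain simultaneously. Hence $d(j):=\sum_{j'}\prod_i I(|G_i^j|\ge|G_i^{j'}|)$ counts the parameters dominated by $j$, and the rule $\hat{M}_{jj}=I[d(j)\ge m-[m\rho]]$ retains the parameters of highest domination count. The structural fact I would isolate is: if $j$ is selected, then in each fixed domain $i$ only $[m\rho]$-many parameters can carry a strictly larger gradient than $j$ (precisely those failing to be dominated), so $|G_i^j|$ ranks within the top $[m\rho]$ block of that domain up to the boundary index fixed by the threshold $m-[m\rho]$. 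Consequently $\hat{C}_{min}^{S^i}$, the minimum over the selected set, is at least the $[m\rho]$-th largest gradient $g^i_{([m\rho])}$ of domain $i$.

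Next I would bound the competitor. For any admissible selection $J$ with $|J|=[m\rho]$, its per-domain minimum cannot exceed the $[m\rho]$-th largest gradient of that domain, since a set of $[m\rho]$ entries always contains one no larger than the $[m\rho]$-th order statistic; that is, $C_{min}^{S^i}\le g^i_{([m\rho])}$. Combining with the previous step gives $\hat{C}_{min}^{S^i}\ge g^i_{([m\rho])}\ge C_{min}^{S^i}$ for each $i$, and the decreasing map $\phi$ then transfers this to the sum. The engine that makes the comparison robust is an exchange argument: if $J\ne\hat{M}$, swap a dominated selected-out parameter $b$ for a dominating selected-in parameter $a$ with $|G_i^a|\ge|G_i^b|$ in every domain; a short three-case check ($g^i_{([m\rho])}$ relative to $|G_i^a|,|G_i^b|$) shows every per-domain minimum is nondecreasing under such a swap, so $\sum_i\phi$ cannot increase as $J$ is transported toward $\hat{M}$.

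The hard part will be guaranteeing that these steps hold simultaneously across all $N$ domains. The termwise bound $\hat{C}_{min}^{S^i}\ge g^i_{([m\rho])}$ for every $i$ forces the selected set to realise the top-$[m\rho]$ block of each domain at once, which is automatic only when the per-domain top-$[m\rho]$ sets coincide (in particular when $N=1$) and is delicate otherwise: a parameter can be cross-domain dominant yet fail to be top-$[m\rho]$ in one specific domain, and ties in $d(\cdot)$ leave the retained indices underdetermined, including the off-by-one at the threshold $m-[m\rho]$. In the exchange argument the matching difficulty is that the symmetric difference between $\hat{M}$ and $J$ may consist of Pareto-incomparable parameters, so no single dominating swap exists. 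I would therefore either invoke the natural non-degeneracy assumption that every selected parameter Pareto-dominates every non-selected one—under which both steps are immediate—or, absent it, replace single swaps by an aggregate majorization-type comparison of the two minima vectors $(\hat{C}_{min}^{S^i})_i$ and $(C_{min}^{S^i})_i$; establishing that aggregate comparison, and pinning down exactly how the threshold $m-[m\rho]$ enters, is the genuine obstacle.
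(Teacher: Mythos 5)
Your strategy is the same one the paper uses: establish the per-domain termwise bound $\hat{C}_{min}^{S^i}\geq C_{min}^{S^i}$ and push it through the decreasing map $\phi(x)=\beta^2/(x+2(1-\rho))$. In fact, the paper's entire proof of \cref{prop3.3} consists of exactly the step you flag as the hard part: it asserts that the rule selects ``parameters that have significant gradients across all domains, with numbers not greater than $[m\rho]$,'' concludes the termwise bound against ``any other selection method,'' and applies monotonicity. None of the structure you build (Pareto-domination counts, order statistics, exchange or majorization arguments) appears there, and none of the difficulties you raise --- simultaneity across domains, Pareto-incomparability, ties, the off-by-one at the threshold --- is addressed.

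Those difficulties are not pedantry; they are fatal to any unconditional proof, so your refusal to wave them away is the correct judgment. Concretely, take $m=4$, $N=2$, $\rho=1/2$ so $[m\rho]=2$, with gradient magnitudes $(|G_1^1|,\dots,|G_1^4|)=(10,9,8,1)$ in domain $1$ and $(|G_2^1|,\dots,|G_2^4|)=(9,8,10,1)$ in domain $2$. The domination counts are $d(1)=3$, $d(2)=2$, $d(3)=2$, $d(4)=1$, and the threshold is $m-[m\rho]=2$, so $\hat{M}$ selects $\{1,2,3\}$ --- already $[m\rho]+1$ parameters, refuting the cardinality claim on which the paper's proof rests; the sharp per-domain guarantee is rank at most $[m\rho]+1$, not $[m\rho]$, which is exactly the off-by-one you anticipated. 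Now the competitor $J=\{1,2\}$, a legitimate selection of exactly $[m\rho]$ parameters, gives $C_{min}^{S^1}=9$ and $C_{min}^{S^2}=8$, while $\hat{C}_{min}^{S^1}=\hat{C}_{min}^{S^2}=8$; the claimed inequality becomes $2\phi(8)\leq\phi(9)+\phi(8)$, which is false since $\phi$ is strictly decreasing. So the termwise bound, the cardinality claim, and the summed inequality all fail at once. The conclusion is that the aggregate comparison you identified as ``the genuine obstacle'' cannot be established without an extra hypothesis (something like your non-degeneracy condition that every selected parameter Pareto-dominates every unselected one, together with $\|\hat{M}\|_0\leq[m\rho]$); the proposition is not true for an arbitrary competitor as stated, and the paper's one-line proof asserts, rather than closes, precisely the gap you found.
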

The proof is shown in Appendix \cref{prop3.3_proof}. $\hat{M}$ represents selecting the parameters with significant gradients across all domains, with the number of selected parameters does not exceed $[m\rho]$. As shown in \cref{il2}(a), other methods may select parameters that are significant in only one domain, which may rise $\mathcal{K}_2$.

\subsubsection{Select Parameters with Aligned Gradients}
$\mathcal{K}_2$ includes an $\mathcal{H}$-divergence term between $S$ and $S_i$. Some methods \cite{Fish, Fishr} suggest that the gradient direction should be close to the average gradient across all domains to reduce this divergence. Building on this idea, we show another proposition:

\begin{proposition}\label{prop3.4}
Let 
$$\overline{M}_{jj}:=I(\sum_{i,i'}^{i\neq i'}G_i^j\cdot G_{i'}^j\leq \frac{1}{m}\sum_{j'}^{m}(\sum_{i,i'}^{i\neq i'}G_i^{j'}\cdot G_{i'}^{j'}))$$
Assume an original selection matrix $M'$, whose $\mathcal{H}$ divergence is denoted as $\mathcal{H'}$. Let the $\mathcal{H}$ divergence resulting from the selection $\overline{M} \cdot M'$ be denoted as $\overline{\mathcal{H}}$. We have:
\begin{equation}
		\max\limits_{i}\frac{1}{2}\overline{\mathcal{H}}\nabla \overline{\mathcal{H}}(\hat{S_i},\hat{S})\leq \max\limits_{i}\frac{1}{2}{\mathcal{H'}}\nabla {\mathcal{H'}}(\hat{S_i},\hat{S}).
\end{equation}
\end{proposition}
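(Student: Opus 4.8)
The plan is to reduce this statement about the abstract $\mathcal{H}$-divergence to an elementary monotonicity argument over the set of updated coordinates. First I would record that the refined selection $\overline{M}\cdot M'$ is a coordinatewise restriction of $M'$: since both are $0/1$ diagonal masks, $(\overline{M}\cdot M')_{jj}=\overline{M}_{jj}M'_{jj}$ equals $1$ only when both factors do, so the active coordinate set of $\overline{M}\cdot M'$ is contained in that of $M'$. The entire argument then amounts to showing that the coordinates switched off by $\overline{M}$ are exactly those that \emph{inflate} the per-domain divergence, so that deleting them cannot increase $\max_i \tfrac12\mathcal{H}\nabla\mathcal{H}(\hat{S}_i,\hat{S})$.

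To make this precise I would first link the divergence to the gradients. Following the reading of $\mathcal{H}\nabla\mathcal{H}(\hat{S}_i,\hat{S})$ used in the proof of \cref{theorem3_2} — where it upper-bounds the loss gap between the model trained on a single domain $\hat{S}_i$ and the model trained on the aggregate $\hat{S}$ — I would Taylor-expand the per-point loss around $\theta_0$ under the sparse update $A(\theta,\cdot)=\theta_0+M\nabla\theta$. To first order this gap on domain $i$ is governed by how the coordinatewise gradient $G_i^j$ deviates from the aggregate gradient $\bar{G}^j:=\tfrac1N\sum_{i}G_i^j$, accumulated over the active coordinates, giving a surrogate of the form $\tfrac12\mathcal{H}\nabla\mathcal{H}(\hat{S}_i,\hat{S})\le \sum_{j:\,M_{jj}=1} w_j\,\delta_{ij}$ with weights $w_j\ge 0$ and $\delta_{ij}$ measuring the misalignment of $G_i^j$ from the remaining domains.

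Next I would translate the rule defining $\overline{M}$ into this language through the identity $\sum_{i\neq i'}G_i^jG_{i'}^j = N(N-1)(\bar{G}^j)^2-\sum_{i}(G_i^j-\bar{G}^j)^2$, which exposes the cross-product score as a mean-alignment term minus the cross-domain variance $V_j:=\sum_i (G_i^j-\bar{G}^j)^2$. Thus the threshold test in $\overline{M}$ separates coordinates by their disagreement $V_j$, and the coordinates it switches off are precisely those contributing most to the misalignment $\delta_{ij}$. Since $\delta_{ij}$ is controlled by $V_j$ uniformly in $i$, each such coordinate carries a nonnegative contribution $w_j\delta_{ij}\ge 0$ to the surrogate for \emph{every} domain simultaneously. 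Passing from $M'$ to $\overline{M}\cdot M'$ therefore deletes only nonnegative terms, lowering the surrogate sum for each fixed $i$ and hence lowering its maximum over $i$; feeding this back through the divergence bound yields $\max_i \tfrac12\overline{\mathcal{H}}\nabla\overline{\mathcal{H}}(\hat{S}_i,\hat{S})\le \max_i \tfrac12\mathcal{H}'\nabla\mathcal{H}'(\hat{S}_i,\hat{S})$.

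I expect the genuinely delicate step to be the first linking step: rigorously justifying that the supremum-over-hypothesis-pairs $\mathcal{H}$-divergence is, to leading order, a nondecreasing and additive functional of the accumulated cross-domain gradient disagreement on the active coordinates. This needs both a linearization assumption on the loss under the (small) sparse update and a monotonicity property of the divergence in the updated coordinate set. Once that surrogate is established, the identity rewriting the cross-product score in terms of $V_j$ and the remaining restriction/monotonicity argument are routine.
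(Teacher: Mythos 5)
Your two preparatory observations are sound: $\overline{M}\cdot M'$ is indeed a coordinatewise restriction of $M'$, and the identity $\sum_{i\neq i'}G_i^jG_{i'}^j=N(N-1)(\bar G^j)^2-\sum_i(G_i^j-\bar G^j)^2$ is correct (it is also the bridge between the cross-product score in \cref{prop3.4} and the variance $\sigma(\mathcal{L}_{0j})$ used in the implementation). But the attempt has two genuine gaps, both sitting exactly where you admit the difficulty lies. First, the surrogate $\tfrac12\mathcal{H}\nabla\mathcal{H}(\hat S_i,\hat S)\le\sum_{j:M_{jj}=1}w_j\,\delta_{ij}$ is never constructed: the $\mathcal{H}$-divergence is a supremum over pairs of hypotheses of differences of disagreement probabilities, and no first-order linearization makes it additive over active coordinates with nonnegative weights; moreover, near a (near-)optimal point the first-order terms are precisely the ones that are negligible, and what drives cross-domain instability is the mismatch of curvatures — which is why a second-order treatment is needed. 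Second, even granting the surrogate, your closing inference is a non sequitur: showing that the refined mask has a smaller \emph{upper bound} does not show that the bounded quantity is smaller. From $\overline{\mathcal{H}}$-divergence $\le$ refined surrogate $\le$ original surrogate, one obtains no comparison between the two divergences, which is what \cref{prop3.4} actually asserts. A symptom of both problems is that your argument never uses the specific form of $\overline M$ (deleting \emph{any} coordinate removes a nonnegative term); and, read against your own identity, the indicator in \cref{prop3.4} with ``$\le$ mean score'' keeps the coordinates of \emph{large} variance relative to mean alignment — the opposite of what you assert it switches off — a sign issue your proof should have flagged rather than silently inverted.

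The paper's proof takes a different, second-order route imported from \cite{Fishr}: it assumes a shared optimum $\theta^*$ with positive-definite per-domain Hessians $H_i$ (\cref{assumption1}), invokes Lemma 1 of \cite{Fishr} to express the cross-domain instability $I^{\epsilon}(\theta^*)$ through ratios of Hessian eigenvalues $\lambda_k^j/\lambda_k^i$, argues that discarding coordinates with high cross-domain gradient variance shrinks those ratios, and finally constructs the hypothesis classes by sampling from the neighborhoods $N^{\epsilon}_{S_i,\theta^*}$ so that the ordering of the instability functionals transfers to an ordering of the resulting $\mathcal{H}$-divergences. That final transfer is itself informal, but the structure is the right one: the comparison is carried out on the quantities being compared (the instability of $\theta_{M'}$ versus that of $\theta_{\overline M\cdot M'}$), not on detachable upper bounds. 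To repair your plan you would need either to upgrade the surrogate inequality to an (approximate) equality, or to perform the comparison directly on the divergences — which in effect forces you into the Hessian/eigenvalue argument the paper uses.
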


The proof is shown in \cref{prop3.4_proof}. $\overline{M}$ aims to select parameters with stable gradient variations across domains. \cref{prop3.4} can be more intuitively understood through \cref{il2}(b), where the parameter is significant across all domains, but its gradient direction varies substantially. This variation suggests it could increase the prediction discrepancy across different domains, raising the upper bound in \cref{th1}. 

From the theorems and propositions, determining $M$ using $\hat{M} \cdot \overline{M}$ is ideal, but practical deployment poses challenges. We introduce two-step operators in the next section to approximate $\hat{M} \cdot \overline{M}$.  

Some methods, such as LoRA and Adapter, align with our bound by promoting sparse updates to leverage pre-trained knowledge for better generalization \cite{lora, PEGO, adapter}. While most are not for domain generalization, \cite{PEGO} resembles our framework by introducing tunable layers to specific model parts, reducing $\rho$, and optimizing the $\mathcal{H}$-divergence between domains. However, its full-layer update strategy violates \cref{prop3.3}, resulting in a suboptimal $\mathcal{K}_2$. We will compare these methods in \cref{accpa} and \cref{futher}.

\subsection{The Implementation of JPS\label{JPS2}}
We now present the overall implementation process, as shown in \cref{alg}.
 For some fixed $\rho$, we use $M^{step2}$ to approximate $\hat{M}\cdot \overline{M}$:
\begin{gather}
	\min_{\nabla\theta,M}R(h_{\theta_0+M^{step2}\nabla\theta},\hat{S}),\\
	s.t.\quad M^{step2} \approx \hat{M}\cdot\overline{M}, M^{step2}_{ij} = 0 ,M^{step2}_{ii}\in \{0,1\}.\nonumber \label{jps}
\end{gather}
We now briefly introduce how to generate $M^{step2}$. Our analysis and experiments focus on the CLIP \cite{clip} pre-trained ViT model \cite{vit}. We can retain more general knowledge by keeping more layers frozen, potentially improving generalization ability while enhancing sparsity \cite{lora, SPU}. Therefore, as shown in \cref{alg}, we only update the first linear layer in each ViT module, leaving the others frozen.

$M^{step2}$ is generated before training and remains unchanged once created. For clarity, we assume only one layer $\mathcal{L}_0$ is chosen for updating. This setting means that in the initial matrix $M$, values in other layers are zero, i.e., $M_{ii} = 0, \forall i \not\in \mathcal{L}_0$. A validation set $V = \{V_1, V_2, \dots, V_N\}$, where $V_i \subset \hat{S}_i$, is formed using a subset of the training data. We compute gradients on each $V_i$ using the pre-trained model with the task loss $R$:
\begin{equation}
	\nabla R^i_{\mathcal{L}_0} =M \cdot \nabla_{\theta_0} R(h_{\theta_0},V_i). \label{pret}
\end{equation}
The absolute gradient value represents parameter importance. With a hyperparameter $\rho$, we select the top $[m_{\mathcal{L}0}\rho]$ important parameters. These selections form domain-specific matrices $M_i$, combined element-wise to produce $M^{step1}$, as shown in part (a) of \cref{alg}.

Next, gradient variance is computed for $M^{step1}\cdot\nabla R_{\mathcal{L}_0}$. Parameters with unstable gradients are removed to reduce $\mathcal{H}\nabla \mathcal{H}(\hat{S_i}, \hat{S})$, yielding the final matrix $M^{step2}$. During training, only the parameters selected by $M^{step2}$ are updated, as shown in part (b) of \cref{alg}. This process can apply to other layers, computing gradients only up to the first selected layer to save resources. Task loss is the only loss during selection and training. Now we discuss the details of our implementation:

\subsubsection{MLP Layer Update}
In previous discussions, we emphasized that the proposed method selects parameters from specific layers with gradient-based selection. As shown in \cref{il1}, many parameters might be already close to the optimal solution for the target domains, making further training unnecessary. Hence, we argue that freezing enough layers is crucial to maintain sufficient sparsity $\rho$ while preserving parameters near the optimal solution. In the ViT architecture, linear layers have a significant impact than attention layers and require fewer resources for optimization \cite{select1,select2}. Additionally, studies \cite{select3, SPU} suggest that in attention networks, the first MLP layer serves as a memory key to detect patterns, while the second layer learns their distribution. To retain general knowledge while adapting to the domain generalization task, we update only the first MLP layer. In the initial $M$ matrix, all positions outside the selected layer are $0$:
\begin{equation}
	M_{ii} = 1,\forall i\in \mathcal{L}_0; M_{ii} = 0, \forall i\not\in \mathcal{L}_0;M_{ij}=0 ,\forall i\neq j.
\end{equation}
 We will further explore updates to different layers in \cref{futher} to validate our selection.

\subsubsection{Importance Selection Operator}
After determining the update layers, based on \cref{theorem3_2} and \cref{prop3.3}, we need to calculate $|\nabla R(A(\theta, S), S_i)|$ for parameter selection. However, since obtaining $|\nabla R(A(\theta, S), S_i)|$ is impractical, we assume that the pre-trained model $\theta_0$ is close to $A(\theta,S)$, similar to many other domain generalization methods \cite{miro, GES, PEGO}:
\begin{equation}
	 \nabla R(h_{\theta_0}, V_i) \approx \nabla R(A(\theta,S), S_i).
\end{equation}
 Therefore, we use the pre-trained model $\theta_0$ and its importance $|\nabla R(h_{\theta_0}, V_i)|$ in \cref{pret} as an approximation:
\begin{align}
	M_{i} = &M\cdot I(|\nabla R_{\mathcal{L}_0}^i|\geq |\nabla R_{{\mathcal{L}_0},[m_{\mathcal{L}_0}\rho]}^i| )\label{step11}, \\ 
	&M^{step1}=M_1\cdot M_2\cdot...M_N. \label{step1}
\end{align}
$\nabla R_{{\mathcal{L}_0},[m_{\mathcal{L}_0}\rho]}^i|$ represents the value of $[m_{\mathcal{L}0}\rho]$-th significant gradient. This operator selects the parameters with significant gradients across all domains to ensure we approximate the $\hat{M}$ in \cref{prop3.3}. As \cref{step11} shows, we select the parameters that are important in all source domains to approximate $\hat{M}$.

\subsubsection{Variance Selection Operator}
In the Importance Selection Operator, parameters are chosen based on the absolute value of their gradients. Let $\mathcal{L}_{0j}$ be the $j$-th parameter of $\mathcal{L}_0$. To approximate $\overline{M}$ in \cref{prop3.4}, as shown in part (b) of \cref{alg}, we compute the gradient variance for each parameter and apply a mean-variance threshold to discard those with high variance. Specifically, we calculate the gradient variances for the selected parameters:
\begin{equation}
	\sigma({\mathcal{L}_{0j}}) =\sum_{i=1}^N(\nabla R_{\mathcal{L}_{0j}}^i-\nabla \overline{R_{\mathcal{L}_{0j}}})^2, M_{jj}^{step1}=1,
\end{equation}
where $\nabla \overline{R_{\mathcal{L}_{0j}}}$ represents the average gradient of parameter $j$ across $N$ domains. $	{\sigma({\mathcal{L}_0})}$ is equivalent to $\sum_{j'=1}^mI(\sum_{i,i'}^{i\neq i'}G_i^j\cdot G_{i'}^j)$ with linear time calculation. To filter parameters with high variance, we apply a mean-variance threshold. The selection process can be expressed as:
\begin{align}
	\overline{\sigma({\mathcal{L}_0})}&=\frac{1}{||M^{step1}||_0}\sum_{j}\sigma({\mathcal{L}_0}_j)\label{step21},\\
	M^{step2}&=M^{step1}\cdot I(\sigma({\mathcal{L}_0})\leq \overline{\sigma({\mathcal{L}_0})}). \label{step2}
\end{align}
In this way, we approximate $\hat{M} \cdot \overline{M}$ using $M^{step2}$. We only use task loss to select and update the model. Experimental results will demonstrate that the chosen parameters are minimal but crucial, validating the effectiveness of our approach.

\section{Experiment}

\subsection{Experimental Setup}

\textbf{Datasets.} We utilize the Domainbed evaluation protocols \cite{domainbed} for a fair comparison across five benchmarks: PACS \cite{PACS} (4 domains \& 7 classes), VLCS \cite{VLCS} (4 domains \& 5 classes), OfficeHome \cite{Office} (4 domains \& 65 classes), TerraIncognita \cite{Terra} (4 domains \& 10 classes), and DomainNet \cite{Domain} (6 domains \& 345 classes).

\textbf{Evaluation protocol.} We adopt the experimental protocol of Domainbed \cite{domainbed}, which enforces fair and realistic evaluations (e.g., same model selection criterion) across competitors. For each experiment, we designate one domain as the target domain and use the remaining domains as the training set. Each domain is used as the target once. The training data comprises 80\% of the total training set, while the remaining 20\% is the validation set. Notably, the validation data for our method is drawn from the 80\% training data to prevent information leakage. We test our method on each dataset using three random seeds and report the final accuracy and variance results.

\textbf{Implementation details.} We use the CLIP pre-trained ViT-B/16 as the backbone, comprising 12 attention modules, and Adam as the optimizer. JPS has a single hyperparameter $\rho$, which controls parameter selection. For each dataset, we conduct a hyperparameter search with $\rho \in [0.2, 0.1, 0.05, 0.01, 0.005, 0.001, 0.0005, 0.0001]$, learning rates in [$8\times 10^{-5}$, $5\times 10^{-5}$], dropout values in $[0.8, 0.5, 0.3, 0.1, 0]$, and validation set sizes of $[10, 20, 50]$ times the batch size. The batch size varies by dataset, with weight decay set to 0. \cref{main} shows the generalization results of JPS with different layer updates. For instance, $L=12$ updates the first MLP layer of all 12 attention modules, while $L=8$ updates only the last 8. Hyperparameters for each dataset are in the Appendix \cref{details}. Our models are selected and trained using only cross-entropy loss.

\textbf{Baseline.} We compare our approach with several existing state-of-the-art domain generalization methods, particularly those that utilize pre-trained models to enhance generalization. The algorithms we compare include approaches using ResNet50 \cite{resnet} pre-trained on ImageNet \cite{ImageNet} and ViT-B/16 \cite{vit} pre-trained by Clip \cite{clip}. The results of this comparison are shown in \cref{main}. 

\begin{table*}[htpb]
	\centering
	\caption{Domain Generalization Compariosn. Present the best result in \textbf{bold}, and \underline{underline} the second-best result. }
	\label{main}
	{\begin{tabular}{@{}ccccccc@{}}
			\bottomrule
			\toprule
			Method &
			PACS &
			VLCS &
			OfficeHome &
			TerraInc &
			DomainNet &
			Avg. \\ \midrule
			\multicolumn{7}{c}{ResNet 50 pre-trained on ImageNet.} \\ \midrule
			\multicolumn{1}{l|}{ERM \cite{ermi}} &
			84.2 $\pm$ \small 0.1 &
			77.3 $\pm$ \small 0.1 &
			67.6 $\pm$ \small 0.2 &
			47.8 $\pm$ \small 0.6 &
			\multicolumn{1}{c|}{44.0 $\pm$ \small 0.1} &
			64.2 \\
			\multicolumn{1}{l|}{CORAL \cite{CORAL}} &
			86.2$\pm$ \small 0.3 &
			78.8 $\pm$ \small 0.6 &
			68.7$\pm$ \small 0.3 &
			47.6$\pm$ \small 1.0 &
			\multicolumn{1}{c|}{41.5 $\pm$ \small 0.1} &
			64.5 \\
			\multicolumn{1}{l|}{MIRO \cite{miro}} &
			85.4 $\pm$ \small 0.4 &
			79.0 $\pm$ \small 0.0 &
			70.5 $\pm$ \small 0.4 &
			50.4 $\pm$ \small 1.1 &
			\multicolumn{1}{c|}{44.3$\pm$ \small 0.2} &
			65.9 \\
			\multicolumn{1}{l|}{SAGM \cite{SAGM}} &
			86.6  $\pm$ \small 0.2 &
			80.0  $\pm$ \small 0.3 &
			70.1  $\pm$ \small 0.2 &
			48.8  $\pm$ \small 0.9 &
			\multicolumn{1}{c|}{45.0  $\pm$ \small 0.2} &
			66.1 \\
			\multicolumn{1}{l|}{GGA \cite{ballas2025gradient}} &
			87.3  $\pm$ \small 0.4 &
			79.9  $\pm$ \small 0.4 &
			68.5  $\pm$ \small 0.2 &
			50.6  $\pm$ \small 0.1 &
			\multicolumn{1}{c|}{45.2  $\pm$ \small 0.2} &
			66.3 \\
			\multicolumn{1}{l|}{SWAD \cite{swad}} &
			88.1 $\pm$ \small 0.1 &
			79.1 $\pm$ \small 0.1 &
			70.6$\pm$ \small 0.2 &
			50.0 $\pm$ \small 0.3 &
			\multicolumn{1}{c|}{46.5 $\pm$ \small 0.1} &
			66.9 \\
			\rowcolor{mycolor}
			\multicolumn{1}{l|}{ JPS } &
			89.6  $\pm$ \small 0.5 &
			82.1  $\pm$ \small 0.4 &
			70.3  $\pm$ \small 0.2 &
			48.4  $\pm$ \small 0.0 &
			\multicolumn{1}{c|}{46.6  $\pm$ \small 0.1} & 67.4
			\\
			\midrule
			\multicolumn{7}{c}{ViT-B/16 pre-trained with CLIP.} \\ \midrule
			\multicolumn{1}{l|}{ERM \cite{clip}} &
			93.7  $\pm$ \small 0.1 &
			82.7  $\pm$ \small 0.1 &
			78.5  $\pm$ \small 0.1 &
			52.3  $\pm$ \small 0.1 &
			\multicolumn{1}{c|}{53.8  $\pm$ \small 0.1} &
			72.2 \\
			\multicolumn{1}{l|}{MIRO \cite{miro}} &
			95.6  $\pm$ \small 0.8 &
			82.2  $\pm$ \small 0.3 &
			82.5  $\pm$ \small 0.1 &
			54.3  $\pm$ \small 0.4 &
			\multicolumn{1}{c|}{54.0  $\pm$ \small 0.3} &
			73.7 \\
			\multicolumn{1}{l|}{SPG \cite{softprompt}} &
			\textbf{97.0} $\pm$ \small 0.5 &
			82.4 $\pm$ \small 0.4 &
			83.6 $\pm$ \small 0.4 &
			50.2 $\pm$ \small 1.2 &
			\multicolumn{1}{c|}{60.1 $\pm$ \small 0.5} &
			74.7 \\
			\multicolumn{1}{l|}{GESTUR \cite{GES}} &
			96.0  $\pm$ \small 0.0 &
			82.8  $\pm$ \small 0.1 &
			84.2  $\pm$ \small 0.1 &
			55.7  $\pm$ \small 0.2 &
			\multicolumn{1}{c|}{\underline{58.9}  $\pm$ \small 0.1} &
			75.5 \\
			\multicolumn{1}{l|}{VL2V \cite{vl2v}} &
			94.3 $\pm$ \small 0.6 &
			82.3 $\pm$ \small 0.3 &
			\textbf{85.8} $\pm$ \small 0.2 &
			55.3 $\pm$ \small 0.7 &
			\multicolumn{1}{c|}{\textbf{59.2} $\pm$ \small 0.1} &
			75.5 \\
			\multicolumn{1}{l|}{DALSCLIP \cite{zhang2025dalsclip}} &
			96.7 $\pm$ \small 0.1 &
			\textbf{83.7} $\pm$ \small 0.3 &
			83.7 $\pm$ \small 0.3 &
			55.5 $\pm$ \small 0.2 &
			\multicolumn{1}{c|}{58.1 $\pm$ \small 0.0} &
			75.5 \\
			\rowcolor{mycolor}
			\multicolumn{1}{l|}{JPS $L = 12$} &
			95.9  $\pm$ \small 0.1 &
			83.2  $\pm$ \small 0.5 &
			84.4  $\pm$ \small 0.1 &
			\underline{57.9}  $\pm$ \small 0.6 &
			\multicolumn{1}{c|}{58.8  $\pm$ \small 0.0} &
			\underline{76.0} \\
			\rowcolor{mycolor}
			\multicolumn{1}{l|}{ JPS $L = 8$} &
			96.0  $\pm$ \small 0.0 &
			\textbf{83.7}  $\pm$ \small 0.2 &
			\underline{84.5}  $\pm$ \small 0.1 &
			\textbf{59.2}  $\pm$ \small 0.8 &
			\multicolumn{1}{c|}{58.7 $\pm$ \small 0.1} &
			\textbf{76.4} \\
			\rowcolor{mycolor}
			\multicolumn{1}{l|}{ JPS $L = 4$} &
			\underline{96.4}  $\pm$ \small 0.1 &
			\underline{82.9}  $\pm$ \small 0.3 &
			84.0  $\pm$ \small 0.1 &
			54.0$\pm$ \small 0.2 &
			\multicolumn{1}{c|}{58.0 $\pm$ \small 0.0
			} & 75.1 \\
			\rowcolor{mycolor}
			\multicolumn{1}{l|}{ JPS $L = 2$} &
			96.3  $\pm$ \small 0.1 &
			82.4  $\pm$ \small 0.0 &
			83.8  $\pm$ \small 0.0 &
			55.1$\pm$ \small 1.0 &
			\multicolumn{1}{c|}{58.4  $\pm$ \small 0.0} & 75.2
			\\
			\hline \bottomrule
	\end{tabular}}
\end{table*}

\begin{table*}[htbp]
			\caption{ Efficient Fine-tuning Comparison. The best results are highlighted in \textbf{bold}.}
				\centering
	\label{eff}
	\begin{tabular}{@{}l|cccccc|cc@{}}
		\bottomrule
		\toprule
		Method & \multicolumn{2}{c}{PACS} & \multicolumn{2}{c}{VLCS} & \multicolumn{2}{c|}{OfficeHome} & \multicolumn{2}{c}{Avg.} \\ \midrule
		& Acc.          & Tunable & Acc.          & Tunable & Acc.          & Tunable & Acc. & Tunable \\ \midrule
		ERM \cite{clip}        & 93.7  $\pm$ \small 0.1          & 86M     & 82.7 $\pm$ \small 0.1          & 86M     & 78.5 $\pm$ \small 0.1         & 86M     & 85.0 & 86M     \\
		MIRO \cite{miro}       & 95.6 $\pm$ \small 0.8         & 172M    & 82.2 $\pm$ \small 0.3          & 172M    & 82.5 $\pm$ \small 0.1         & 172M    & 86.8 & 172M    \\
		SAGM \cite{SAGM}       & 94.3 $\pm$ \small 0.3          & 86M     & 82.0  $\pm$ \small 0.3        & 86M     & 83.4   $\pm$ \small 0.1       & 86M     & 86.6 & 86M     \\
		GESTUR \cite{GES}      & 96.0 $\pm$ \small 0.0         & 172M    & 82.8 $\pm$ \small 0.1         & 172M    & 84.2  $\pm$ \small 0.1        & 172M    & 87.7 & 172M    \\
		Adapter \cite{adapter} & 92.0  $\pm$ \small 0.5        & 160K   & 79.8 $\pm$ \small 0.4          & 160K   & 72.9 $\pm$ \small 0.4          & 160K   & 81.6 & 160K   \\
		LoRA \cite{lora}       & 96.0 $\pm$ \small 0.1         & 150K   & 82.7  $\pm$ \small 0.0        & 300K   & 83.4  $\pm$ \small 0.1        & 300K   & 87.4 & 150K   \\
		PEGO \cite{PEGO}       & \textbf{96.5 $\pm$ \small 0.1} & 290K   & 83.2 $\pm$ \small 0.3         & 580K   & 84.2 $\pm$ \small 0.3         & 580K   & 87.9 & 480K   \\
		\rowcolor{mycolor}
		JPS            & 96.4 $\pm$ \small 0.1          & \textbf{1.5K}    & \textbf{83.7 $\pm$ \small 0.2} & \textbf{4.7K}    & \textbf{84.5 $\pm$ \small 0.1} & \textbf{0.5K }    &\textbf{ 88.2 } & \textbf{2.2K}    \\ \hline \bottomrule
	\end{tabular}
\end{table*}

\subsection{Comparison with Domain Generalization Methods}
Our approach enhances model generalization on unseen domains. The training loss in our method is the same as ERM, differing only in the pre-training parameter selection process. This selection increases average accuracy from 72.2 pp to 76.3 pp, with low variance across most datasets, demonstrating the effectiveness and stability of our method. Our algorithm achieves the best or second-best results on multiple datasets, highlighting the value of leveraging pre-trained model knowledge to achieve excellent performance without additional structures or assumptions.
 
Furthermore, reducing the number of training layers $L$ lowers prediction accuracy but maintains competitiveness. Given the ViT structure, fewer layers accelerate training and reduce consumption. These results show that our method ensures high efficiency and acceptable performance, making it suitable for resource-constrained scenarios.

\subsection{Comparison with Efficient Fine-tuning Methods\label{accpa}}

This experiment includes parameter-efficient methods with their results from \cite{PEGO}, where the number of tunable layers varies across datasets. Other settings are consistent with our main experiments. For comparison, we use the best accuracy results from the main experiment under aligned settings on the PACS, VLCS, and OfficeHome, reporting accuracy and tunable parameters. Tunable refers to the number of tunable parameters in the backbone. The results are shown in \cref{eff}.  For the complete comparison, please refer to the Appendix \cref{details}. 

Our method achieves notable performance with minimal parameter fine-tuning. Accuracy results are on $L=8$ (VLCS, OfficeHome) and $L=4$ (PACS). In contrast, many pre-trained model-based methods require tuning the entire model, adjusting twice as many parameters as ERM. This requirement can lead to excessive resource use and transmission challenges. For instance, transferring our model from PACS to OfficeHome requires only a sparse matrix with thousands of values, whereas others need the entire model.

Compared to parameter-efficient methods, our approach achieves higher accuracy with only a few thousand adjustable parameters. This result indicates that a few crucial parameters can significantly impact the prediction outcomes. It also highlights that pre-trained models encode enough information, and freezing most parameters ensures excellent generalization.

\subsection{Analysis Study\label{futher}}

\begin{figure*}[t]
	\includegraphics[width=1\textwidth]{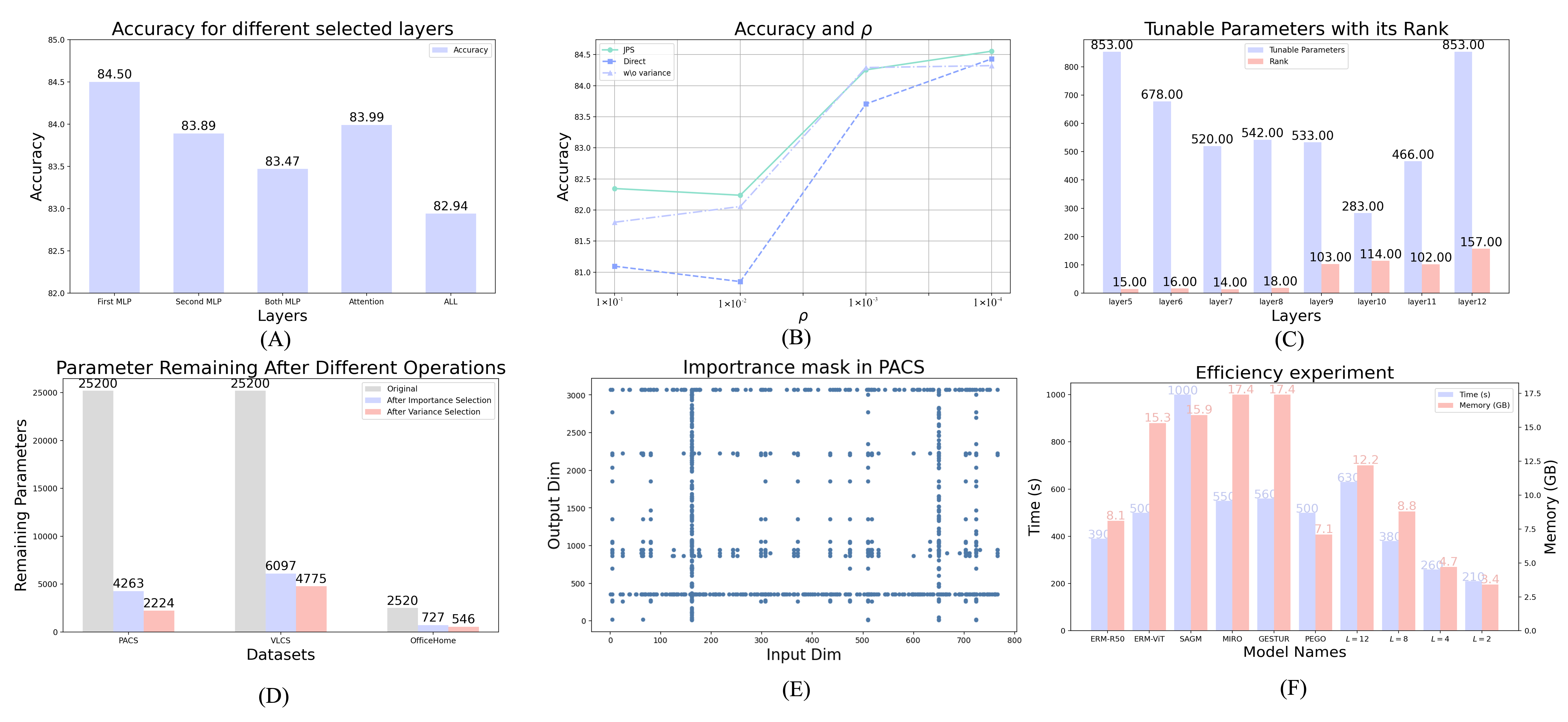}
	\caption{Analysis Study. The corresponding descriptions can be found in the paragraphs in \cref{accpa}.}
	\label{p5}
\end{figure*}

We further analyze the characteristics of our method, using JPS with $L=8$ and experimental settings consistent with the main experiments unless stated otherwise. The number of tunable parameters also represents those in the backbone.

\subsubsection{Accuracy for Different Selected Layers.} In \cref{JPS}, we discussed the rationale for selecting only the parameters from the first linear layer of ViT for training. To validate this choice, we conducted additional experiments on OfficeHome. All experimental hyperparameters are consistent with those used for $L=8$ in the main experiment, as shown in part (A) of \cref{p5}. The subscripts in the figure represent the layers for parameter selection, with "ALL" indicating that all layers, except the LN layers, are involved in parameter selection and updates. The experimental results show that fine-tuning the first MLP layer yields the best accuracy, while selecting other layers leads to performance degradation, validating the rationale behind choosing the first linear layer as the target for JPS.

\begin{table}[htbp]
	\centering
	\caption{Accuracy for JPS $L = 8$ under Different $\rho$. \label{rho}}
		\begin{tabular}{@{}cccccc@{}}
		\bottomrule
		\toprule
		$\rho$                            & 0.1    & 0.01   & 0.001  & 0.0001 & 0.00001 \\ \midrule
		\multicolumn{1}{c|}{JPS}          & 82.4 & 82.2 & 84.3 & 84.5 & 83.3  \\
		\multicolumn{1}{c|}{Direct}       & 81.0 & 80.8 & 83.7 & 84.3 & 84.0  \\
		\multicolumn{1}{c|}{w/o variance} & 81.8 & 82.1 & 84.3 & 84.3 & 83.4  \\
		\hline \bottomrule
	\end{tabular}
\end{table}

\subsubsection{Accuracy for Different {$\mathbf{\rho}$}.}
This experiment evaluates the impact of two operators in our method and the effect of $\rho$ on model generalization. We updated the first linear layer from the last eight layers, keeping hyperparameters consistent with $L=8$ on OfficeHome, as shown in part (B) of \cref{p5}. "JPS" denotes our method, "Direct" selects parameters based on gradient sums across domains, and "w/o variance" uses only the importance selection operator. Results show that reducing $\rho$ improves target domain accuracy, indicating that freezing enough parameters to preserve general knowledge enhances generalization. While all methods improve as $\rho$ decreases, "JPS" outperforms both "Direct" and "w/o variance," with the latter surpassing "Direct," demonstrating the significance of our importance selection operator and the added benefit of variance selection. Even at $\rho = 1\times 10^{-4}$, JPS reduces parameters compared to "Direct" (see part (D) of \cref{p5}) while maintaining performance, and lowering transmission and training costs. Detailed values are provided in the \cref{rho}.

\subsubsection{Tunable Parameters with its Rank.} In part (C) of \cref{p5}, we present the number of parameters and their ranks in the selection mask $M^{step2}$ on the PACS dataset. The ranks are computed using SVD. The results reveal a U-shaped distribution for the number of selected parameters, with corresponding ranks increasing. This finding contrasts with the low-rank structure assumed by adapter methods, suggesting that key parameters for downstream tasks may be sparse rather than low-rank and potentially have higher ranks.

\subsubsection{Parameters Remaining After Operators.} We display the number of parameters remaining after each step of the JPS method on different datasets in part (D) of \cref{p5}. "Original" refers to the parameter count for the eight linear layers multiplied by $\rho$ for each dataset. The results indicate that, for all datasets, the first operator significantly reduces the number of selected parameters, with reductions exceeding 50\%. The reduction suggests that significant parameters across all domains are relatively few. The second step further filters about 10\% parameters,  indicating that some parameters are indeed significant across all domains but differ in direction, validating the effectiveness of our method.

\subsubsection{Importance Mask.} We visualize the importance mask of the first linear layer in the last module of ViT on the PACS dataset in part (E) of \cref{p5}. It is important to note that the selection process depends solely on $\rho$, which ensures consistency across all models. Points marked represent selected parameter positions. The visualization shows that these selected parameters are concentrated along horizontal and vertical lines. This pattern suggests that the model identifies information important across all domains, likely corresponding to specific regions in the input features.

\subsubsection{Efficiency experiment.} This experiment evaluates resource consumption and training time. All settings match the main experiment, and we report average resource usage and time across five benchmarks. Metrics include peak GPU memory usage and time per 1,000 training steps. Memory represents the maximum GPU consumption per batch update on the Domainbed benchmark, while time reflects the average computation time per 1,000 updates. Testing is conducted on the same 4090 GPU to avoid device-related discrepancies. As shown in part (F) of \cref{p5}, with $L = 12$, our method slightly reduces resource usage but increases computation time due to gradient computation and updates within masks. However, as $L$ decreases, resource and time requirements drop while maintaining good generalization. These results demonstrate the efficiency of our method in leveraging pre-trained model knowledge.

\section{Conclusion}
We propose a new perspective: pre-trained models hold valuable generic knowledge, and fully fine-tune them risks losing this knowledge, reducing generalization. To address this, we introduce a framework for selecting and updating parameters, fine-tuning those with strong generalization ability from specific layers while preserving generic knowledge. Theoretically, we derive an error bound for domain generalization based on parameter sparsity, showing that selective fine-tuning reduces prediction error on unseen domains. In deployment, our method uses importance and variance selection operators to update key parameters. Experiments on Domainbed demonstrate that our method surpasses state-of-the-art domain generalization and parameter-efficient models. Further analysis reveals that frozen parameters retain knowledge while tunable ones enhance generalization, reinforcing the validity of our approach.

\bibliography{Reference}
\bibliographystyle{unsrt}
\newpage
\onecolumn
\appendices

\section{Proof of \cref{lemma3.1}\label{lemma3.1_proof}}
\begin{lemma}
Assume that the loss function \( R \) exhibits \( p \)-Lipschitz continuity and Pointwise Hypothesis Stability \( c \) \cite{stable} for method \( A \). Set \( |\nabla_{\theta} R(A(\theta, S), S)| \geq C_{min}^{S} \cdot I \). Let \( \beta = \max(p, c) \), we have:
\begin{align}
		E_{\hat{S},i\sim U(n)}&(|l(h_{A(\theta,\hat{S})}(x_i),y_i)-l(h_{A(\theta,\hat{S}^{/i})}(x_i),y_i)|)\leq \mathcal{K}_1,\nonumber\\
		&\mathcal{K}_1 = \frac{\beta^2}{(C_{min}^{\hat{S}}+2(1-\rho))n}, (x_i,y_i)\in \hat{S}.
\end{align}
\end{lemma}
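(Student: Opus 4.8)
The plan is to extend the pointwise hypothesis stability argument of \cite{stable} to the sparse-update regime, tracking how the sparsity level $\rho$ and the gradient floor $C_{min}^{\hat{S}}$ sharpen the stability constant. Since the left-hand side is exactly the leave-one-out loss deviation that defines pointwise stability, the strategy is to first bound the parameter perturbation $\|A(\theta,\hat{S}) - A(\theta,\hat{S}^{/i})\|$ induced by deleting $(x_i,y_i)$, and then transfer that bound to the loss through Lipschitz continuity.

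First I would use the $p$-Lipschitz property to pass from the loss gap to the parameter gap,
\[
|l(h_{A(\theta,\hat{S})}(x_i),y_i) - l(h_{A(\theta,\hat{S}^{/i})}(x_i),y_i)| \leq p\,\|A(\theta,\hat{S}) - A(\theta,\hat{S}^{/i})\|.
\]
Because every admissible update has the form $\theta_0 + M\nabla\theta$ with $\|M\|_0 = [m\rho]$ (see \cref{spas}), the perturbation is confined to the $[m\rho]$-dimensional active subspace, and freezing the remaining $1-\rho$ fraction of coordinates behaves like an implicit $\ell_2$ penalty of strength $\lambda = 1-\rho$ that cannot move to absorb the deletion. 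I would then write the first-order optimality conditions of the sparse objective at $A(\theta,\hat{S})$ and $A(\theta,\hat{S}^{/i})$, subtract them, and identify the single-point term $\tfrac1n\nabla_\theta l(\cdot,z_i)$ as the driving perturbation. A standard strong-convexity stability computation then gives a perturbation of order $\frac{1}{(C_{min}^{\hat{S}}+2(1-\rho))\,n}$ times a Lipschitz factor, where the effective curvature modulus is the gradient floor $C_{min}^{\hat{S}}$ augmented by the $2\lambda = 2(1-\rho)$ contributed by the frozen directions. Choosing $\beta = \max(p,c)$ so that $\beta^2$ dominates the product of the Lipschitz constant $p$ and the raw stability constant $c$, and taking the expectation $E_{\hat{S},i\sim U(n)}$, yields the claimed $\mathcal{K}_1$.

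The main obstacle I expect is the precise justification of the denominator $C_{min}^{\hat{S}}+2(1-\rho)$. One must argue carefully that, under the sparsity constraint, the curvature seen by the deletion perturbation is lifted additively by the frozen coordinates — i.e., that restricting the update to the active subspace behaves like a $\lambda$-strongly-convex regularization with $\lambda = 1-\rho$ — while simultaneously the gradient lower bound $C_{min}^{\hat{S}}\cdot I$ lower-bounds the curvature along the active directions. This forces one to use the constrained optimality conditions rather than the unconstrained ones, and to control the interaction between the projection onto the $[m\rho]$-dimensional subspace and the gradient floor. The remaining steps, namely the Lipschitz reduction and the final expectation, are routine.
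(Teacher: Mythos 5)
Your overall architecture coincides with the paper's: both treat the frozen coordinates as an implicit $\ell_2$ penalty $\lambda\|(I-M)(\theta-\theta_0)\|^2$ with $\lambda=1-\rho$, both reduce the lemma to a bound on the leave-one-out parameter perturbation $\|A(\theta,\hat{S})-A(\theta,\hat{S}^{/i})\|$, and both finish with the $p$-Lipschitz transfer and $\beta=\max(p,c)$. The genuine gap is your central step, the ``standard strong-convexity stability computation.'' Nothing in the hypotheses supplies curvature: $|\nabla_{\theta}R(A(\theta,S),S)|\geq C_{min}^{S}\cdot I$ is a first-order (gradient) floor at the learned parameters, and your proposed repair --- that this gradient floor ``lower-bounds the curvature along the active directions'' --- is false; a bound on the gradient at a point says nothing about the Hessian, so after subtracting the two optimality conditions you have no mechanism to convert the driving term $\frac{1}{n}\nabla_{\theta}l(\cdot,z_i)$ into a parameter displacement. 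The paper never needs curvature: it Taylor-expands $f_{\hat{S}}(\theta)=R(h_{\theta},\hat{S})+\lambda\|(I-M)(\theta-\theta_0)\|^2$ to first order at the minimizer $A(\theta,\hat{S})$ and uses the coordinate-wise gradient floor to obtain a \emph{linear} growth inequality
\[
f_{\hat{S}}(\theta)-f_{\hat{S}}(A(\theta,\hat{S}))\geq (C_{min}^{\hat{S}}+2\lambda)\,\|\theta-A(\theta,\hat{S})\|,
\]
which plays exactly the role you wanted strong convexity to play, but is derivable (modulo sign-alignment caveats) from the stated assumption.

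The second discrepancy is where $c$ enters. In the paper, the growth inequality is combined with a Bousquet--Elisseeff-style comparison of the full and leave-one-out regularized objectives, so the objective gap between $A(\theta,\hat{S})$ and $A(\theta,\hat{S}^{/i})$ collapses to the single-sample term $\frac{1}{n}\bigl(l(h_{A(\theta,\hat{S})}(x_i),y_i)-l(h_{A(\theta,\hat{S}^{/i})}(x_i),y_i)\bigr)$; the \emph{assumed} pointwise hypothesis stability then bounds the expectation of this term by $c/n$, yielding $E\|A(\theta,\hat{S})-A(\theta,\hat{S}^{/i})\|\leq \frac{c}{n(C_{min}^{\hat{S}}+2\lambda)}$. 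That is, the raw constant $c$ is an input that gets bootstrapped into the sharper constant, and the product $pc\leq\beta^2$ arises precisely in the final Lipschitz step. In your route $c$ never does any work (your driving term is controlled by $p$ alone), so your appeal to ``the product of the Lipschitz constant $p$ and the raw stability constant $c$'' is unsupported bookkeeping. To repair the proposal, either add a genuine strong-convexity hypothesis (which changes the lemma) or replace the subtracted-optimality-conditions step with the paper's chain: first-order growth bound, leave-one-out objective comparison, then the stability hypothesis, and finally the Lipschitz transfer and the identification $\lambda=1-\rho$ via $E\|(I-M)(\theta-\theta_0)\|^2=(1-\rho)\sum_i(\theta^i-\theta_0^i)^2$.
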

\begin{proof}
First, set $\theta = \theta_0+M\nabla\theta$, we can reformulate \cref{spas} as:
\begin{gather}
	\min_{\theta}R(h_{\theta},S)\nonumber,\\
	s.t. ||(I-M)(\theta-\theta_0)||^2 =0. \label{spas1}
\end{gather}
Here, $M_{ii}\in\{0,1\}$, by Lagrangian duality, the optimization of \cref{spas1} is equal to solving the optimization with sparsity constraints:
\begin{equation}
	\min_{\theta}\max_{\lambda}R(h_{\theta},S)+\lambda ||(I-M)(\theta-\theta_0)||^2. \label{spas2}
\end{equation}
It can be proven that the solution to \cref{spas2} is the same as the optimization of \cref{spas1} \cite{onthe}. We formally introduce pointwise stability here:

\textbf{Pointwise Hypothesis Stability.} \cite{stable}  An algorithm A has pointwise hypothesis stability $c$ with respect to the loss function $R$ if the following holds:
\begin{equation}
	E_{\hat{S},i\sim U(n)}|l(h_{A(\theta,\hat{S})}(x_i), y_i) - l(h_{A(\theta,\hat{S}^{/i})}(x_i), y_i)| \leq c \label{pf4}
\end{equation}
 
Given that we are performing a classification task with parameters trained by $A$, we assume the function satisfies this condition for some $c$.

We simplify the above optimization objective as follows:
\begin{equation}
	f_{\hat{S}}(\theta)=R(h_{\theta},\hat{S})+\lambda ||(I-M)(\theta-\theta_0)||^2 \label{pf1}.
\end{equation}

Based on the definition in our main text, let us assume the model parameters learned on $\hat{S}$ are represented as $A(\theta, \hat{S})$. At this point, we can express $f_{\hat{S}}(\theta)$ using the Taylor expansion in $A(\theta, \hat{S})$  as follows:
\begin{equation}
	f_{\hat{S}}(\theta) = f_{\hat{S}}(A(\theta,\hat{S}))+(\theta-A(\theta,\hat{S}))^T\nabla f_{\hat{S}}(A(\theta,\hat{S})) \label{pf2}
\end{equation}
We assume that the $A(\theta, \hat{S})$ learned under the $A$ method is obtained by minimizing the loss function, but we assume it has a minimal non-zero gradient, which is typical in most real-world scenarios:
\begin{equation}
	|\nabla R(A(\theta,\hat{S}),\hat{S})| \geq C_{\min}^{\hat{S}}\cdot I \label{pf3}.
\end{equation}
Here, the inequality indicates that the value at any position in the gradient is greater than $C_{min}^{\hat{S}}$, meaning each component of the gradient is at least as large as $C_{min}^{\hat{S}}$. Using definition in \cref{pf1} to express \cref{pf2}:
\begin{equation}
	f_{\hat{S}}(\theta)-f_{\hat{S}}(A(\theta,\hat{S})) =\\
	(\theta-A(\theta,\hat{S}))^T\nabla (R(A(\theta,\hat{S}),\hat{S})\\
	+\lambda||(I-M)(\theta-\theta_0)||^2).
\end{equation}
Using the inequality in \cref{pf3}:
\begin{equation}
	f_{\hat{S}}(\theta)-f_{\hat{S}}(A(\theta,\hat{S}))\geq(\theta-A(\theta,\hat{S}))^T(C_{\min}^{\hat{S}}+2\lambda )\cdot I.
\end{equation}
Then we have:
\begin{equation}
	f_{\hat{S}}(\theta)-f_{\hat{S}}(A(\theta,\hat{S}))\geq(C_{\min}^{\hat{S}}+2\lambda )||\theta-A(\theta,\hat{S})||.
\end{equation}

Now, we consider $\forall \theta_1,theta_2 \in \Theta$:
\begin{equation}
	f_{\hat{S}}(\theta_1)-f_{\hat{S}}(\theta_2) = \\
	R(\theta_1,\hat{S})+\lambda||\theta_1-\theta_0||^2\\
	-( R(\theta_2,\hat{S})+\lambda||\theta_2-\theta_0||^2)
\end{equation}
We randomly select a sample point $(x_i, y_i)$ from $\hat{S}$, and as described in the main text, use $\hat{S}^{/i}$ to denote the remaining sample set after removing $(x_i, y_i)$ from $\hat{S}$. The above equation can be rewritten as:
\begin{equation}
	f_{\hat{S}}(\theta_1)-f_{\hat{S}}(\theta_2) =\\
	R(\theta_1,\hat{S}^{/i})+\lambda||\theta_1-\theta_0||^2\\
	-( R(\theta_2,\hat{S}^{/i})+\lambda||\theta_2-\theta_0||^2)\\
	+\frac{l(h_{\theta_1}(x_i),y_i)-l(h_{\theta_2}(x_i),y_i)}{n}
\end{equation}
Let $\theta_1 = A(\theta, \hat{S}^{/i})$ and $\theta_2 = A(\theta, \hat{S})$. Note that $A(\theta, \hat{S}^{/i})$ minimizes $R(\theta_1, \hat{S}^{/i}) + \lambda ||\theta - \theta_0||^2$:
\begin{equation}
	f_{\hat{S}}(A(\theta,\hat{S}))-f_{\hat{S}}(A(\theta,\hat{S}^{/i}))\leq\\
	\frac{l(h_{A(\theta,\hat{S})}(x_i),y_i)-l(h_{A(\theta,\hat{S}^{/i})}(x_i),y_i)}{n}.
\end{equation}
Using the assumption in \cref{pf4}, we have:
\begin{equation}
	E_{\hat{S},i\sim U(n)}(C_{min}^{\hat{S}}+2\lambda)||A(\theta,\hat{S}^{/i})-A(\theta,\hat{S})||\leq\\
	E_{\hat{S},i\sim U(n)} \frac{l(h_{A(\theta,\hat{S})}(x_i),y_i)-l(h_{A(\theta,\hat{S}^{/i})}(x_i),y_i)}{n},
\end{equation}
\begin{equation}
	E_{\hat{S},i\sim U(n)}(C_{min}^{\hat{S}}+2\lambda)||A(\theta,\hat{S}^{/i})-A(\theta,\hat{S})||\leq \frac{c}{n}.
\end{equation}
Thus we have:
\begin{equation}
		E_{\hat{S},i\sim U(n)}||A(\theta,\hat{S}^{/i})-A(\theta,\hat{S})||\leq \frac{c}{n(C_{min}^{\hat{S}}+2\lambda)} \label{pf6}
\end{equation}
Using the $p$-Lipschitz condition and assuming $\beta = \max(c, p)$, we can derive the following equation:
\begin{equation}
	l(h_{A(\theta,\hat{S})}(x_i),y_i)-l(h_{A(\theta,\hat{S}^{/i})}(x_i),y_i)\\
	\leq p||A(\theta,\hat{S}^{/i})-A(\theta,\hat{S})||,
\end{equation}
\begin{equation}
		E_{\hat{S},i\sim U}(|l(h_{A(\theta,\hat{S})}(x_i),y_i)-l(h_{A(\theta,\hat{S}^{/i})}(x_i),y_i)|)\\
		\leq \frac{\beta^2}{n(C_{min}^{\hat{S}}+2\lambda)} \label{pf5}
\end{equation}
At this point, assuming we have set the sparsity of the updates as $\rho$, the following equation holds:
\begin{equation}
\begin{aligned}
	&E||(I-M)(\theta-\theta_0)||
	\\=&E\sum_{i=1}^n(1-M_{ii})^2(\theta^i-\theta^i_0)^2
	\\=&\sum_{i=1}^n(\theta^i-\theta^i_0)^2E(1-M_{ii})^2
	\\=&\sum_{i=1}^n(\theta^i-\theta^i_0)^2(1-\rho).
\end{aligned}
\end{equation}
Thus in $A(\theta,\hat{S})$, we have $\lambda = 1-\rho$. Combine this result with \cref{pf5}, we proof the \cref{lemma3.1}:
\begin{equation}
	E_{\hat{S},i\sim U(n)}(|l(h_{A(\theta,\hat{S})}(x_i),y_i)-l(h_{A(\theta,\hat{S}^{/i})}(x_i),y_i)|)\\
	\leq \frac{\beta^2}{(C_{min}^{\hat{S}}+2(1-\rho))n}.
\end{equation}
\end{proof}

\section{Proof of \cref{prop3.3}\label{prop3.3_proof}}
\begin{proposition}
Let $G_i^j = \nabla_{\theta_j} R(A(\theta,S),S_i)$, where $\theta_j$ represents the $j$ parameter in $\theta$. If
$$\hat{M}_{jj}:=I[(\sum_{j'=1}^m\prod_{i=1}^N I(|G_i^j|\geq |G_i^{j'}|)\geq m -[m\rho])],$$
where $\hat{C}_{min}^{S^i}$ is obtained by $\hat{M}$, while $C_{min}^{S^i}$ is obtained through an arbitrary selection method. For some fixed $\rho$, we have:
\begin{equation}
\sum_{i=1}^N\frac{\beta^2}{\hat{C}_{min}^{{S}^i}+2(1-\rho)}\leq \sum_{i=1}^N\frac{\beta^2}{C_{min}^{{S}^i}+2(1-\rho)}.
\end{equation}
\end{proposition}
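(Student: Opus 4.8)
The plan is to exploit the monotonicity of each summand in the domain-wise minimum gradient and thereby reduce the stated inequality to a term-by-term comparison of order statistics. Fix $\rho<1$, so that the map $g(x)=\frac{\beta^2}{x+2(1-\rho)}$ is positive and strictly decreasing on $x\ge 0$ (the denominator is bounded away from zero since $C_{min}\ge 0$). Consequently, to establish $\sum_{i} g(\hat{C}_{min}^{S^i})\le \sum_{i} g(C_{min}^{S^i})$ it suffices to prove the per-domain dominance $\hat{C}_{min}^{S^i}\ge C_{min}^{S^i}$ for every $i$, after which summing the monotone bounds yields the claim. This recasts the whole problem as a statement about the largest gradient magnitudes of each domain under the two competing selection rules.

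First I would unpack the selection mask. For a fixed coordinate $j$, the quantity $\sum_{j'=1}^m\prod_{i=1}^N I(|G_i^j|\ge|G_i^{j'}|)$ counts the parameters $j'$ that $j$ dominates \emph{jointly}, i.e.\ in every domain simultaneously, since the product of indicators equals $1$ only when $|G_i^j|\ge|G_i^{j'}|$ holds for all $i$. The rule $\hat{M}_{jj}=1$ therefore selects exactly those $j$ that jointly dominate at least $m-[m\rho]$ parameters. The key step is to push this joint statement down to a single domain: if $j$ jointly dominates a set of size $m-[m\rho]$, then in any individual domain $i$ it also dominates that set, so $|G_i^j|$ is at least the gradient magnitude of at least $m-[m\rho]$ parameters. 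Hence every selected coordinate lies among the largest-magnitude parameters of domain $i$, so its magnitude is bounded below by $\tau_i$, the corresponding order statistic (essentially the $[m\rho]$-th largest gradient magnitude in domain $i$). Minimizing over the selected set gives $\hat{C}_{min}^{S^i}\ge\tau_i$ for each $i$.

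For the converse direction I would argue that any competing $\rho$-sparse mask selects at most $[m\rho]$ coordinates, so in domain $i$ it cannot contain more than $[m\rho]$ parameters whose magnitude exceeds $\tau_i$; its minimum selected magnitude is therefore at most $\tau_i$, giving $C_{min}^{S^i}\le\tau_i\le\hat{C}_{min}^{S^i}$. Combined with the monotonicity reduction, this closes the argument. The main obstacle is that joint dominance across domains is only a \emph{partial} (Pareto) order: distinct parameters may be incomparable, so the selected set need not have cardinality exactly $[m\rho]$, and the self-term inside the count introduces an off-by-one slack in the order-statistic threshold. I would handle this by comparing $\hat{M}$ against selections of matched cardinality and by tracking the threshold $m-[m\rho]$ carefully, so that joint dominance still forces simultaneous top-rank membership in every domain; this bookkeeping around the partial order and the threshold is the delicate point on which the proof ultimately hinges.
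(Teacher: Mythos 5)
Your proposal follows the same route as the paper's own proof: reduce, via monotonicity of $x\mapsto \beta^2/(x+2(1-\rho))$, to the per-domain claim $\hat{C}_{min}^{S^i}\ge C_{min}^{S^i}$, and derive that claim from the observation that joint dominance in all domains forces top-rank membership in each domain separately. The paper asserts exactly this skeleton in two lines; your order-statistic unpacking is considerably more explicit about why each half should hold.

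The problem is that the ``delicate point'' you defer to bookkeeping at the end is a genuine gap, and it cannot be closed for the statement as literally written. Because the count in $\hat{M}_{jj}$ includes the self-term $j'=j$ and the threshold is the non-strict $\ge m-[m\rho]$, a selected coordinate is only guaranteed to have at most $[m\rho]$ strict dominators in each domain, i.e. $\hat{C}_{min}^{S^i}\ge \tau_i^{([m\rho]+1)}$, whereas your converse bound for a competitor of cardinality $[m\rho]$ is $C_{min}^{S^i}\le \tau_i^{([m\rho])}$. The two bounds meet at different order statistics and do not chain. This is not a technicality: take $m=3$, $N=2$, $\rho=1/3$ so $[m\rho]=1$, and gradient magnitudes $(3,2,1)$ for coordinates $1,2,3$ in both domains. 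Coordinate $2$ jointly dominates $\{2,3\}$, so the stated rule selects $\{1,2\}$ --- already more than $[m\rho]$ parameters, contradicting the sparsity claim made for $\hat{M}$ --- and $\hat{C}_{min}^{S^i}=2$, while the competitor selecting $\{1\}$ has $C_{min}^{S^i}=3$; the proposition's inequality then fails strictly. To make your argument (and the proposition) correct one must sharpen the selection rule, e.g.\ require the count to exceed $m-[m\rho]$ strictly (equivalently, drop the self-term), after which a selected coordinate has at most $[m\rho]-1$ strict dominators per domain, both of your bounds land on $\tau_i^{([m\rho])}$, and the chain $C_{min}^{S^i}\le\tau_i^{([m\rho])}\le\hat{C}_{min}^{S^i}$ closes; one must also, as you correctly note, restrict competitors to masks of matched cardinality $\|M\|_0=[m\rho]$ as in \cref{spas}, since against a competitor selecting fewer parameters even the corrected rule loses. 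The paper's proof never confronts any of this --- it simply asserts $\hat{C}_{min}^{S^i}\ge C_{min}^{S^i}$ --- so your attempt, by making the order-statistic comparison explicit, actually exposes the off-by-one flaw in the published argument rather than fixing it.
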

$\hat{M}$ represents selecting the parameters with significant gradients across all domains, with the number of selected parameters does not exceed $[m\rho]$. As shown in \cref{il2}(a), other methods may select parameters that are significant in only one domain, which may rise $\mathcal{K}_2$.
\begin{proof}
The proof of \cref{prop3.3} is straightforward. For a fixed $\rho$, according to
\begin{equation}
		\hat{M}_{jj}=I[(\sum_{j'=1}^m\prod_{i=1}^N I(|G_i^j|\geq |G_i^{j'}|)\geq m -[m\rho])].
\end{equation}
We select parameters that have significant gradients across all domains, with numbers not greater than $[m\rho]$. Therefore, compared to any other selection method, we have:
\begin{equation}
	\hat{C}_{min}^{{S}^i}\geq C_{min}^{{S}^i}.
\end{equation}
Thus, for a fixed $\rho$, the following equation holds:
\begin{equation}
	\frac{1}{N}\sum_{i=1}^N\frac{\beta^2}{\hat{C}_{min}^{{S}^i}+2(1-\rho)}\leq \frac{1}{N}\sum_{i=1}^N\frac{\beta^2}{C_{min}^{{S}^i}+2(1-\rho)}.
\end{equation}
\end{proof}

\section{Proof of \cref{prop3.4}\label{prop3.4_proof}}
\begin{proposition}
Let 
$$\overline{M}_{jj}:=I(\sum_{i,i'}^{i\neq i'}G_i^j\cdot G_{i'}^j\leq \frac{1}{m}\sum_{j'}^{m}(\sum_{i,i'}^{i\neq i'}G_i^{j'}\cdot G_{i'}^{j'}))$$
Assume an original selection matrix $M'$, whose $\mathcal{H}$ divergence is denoted as $\mathcal{H'}$. Let the $\mathcal{H}$ divergence resulting from the selection $\overline{M} \cdot M'$ be denoted as $\overline{\mathcal{H}}$. We have:
\begin{equation}
		\max\limits_{i}\frac{1}{2}\overline{\mathcal{H}}\nabla \overline{\mathcal{H}}(\hat{S_i},\hat{S})\leq \max\limits_{i}\frac{1}{2}{\mathcal{H'}}\nabla {\mathcal{H'}}(\hat{S_i},\hat{S}).
\end{equation}
\end{proposition}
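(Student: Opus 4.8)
The plan is to reduce the claimed inequality about the $\mathcal{H}$-divergence to a coordinatewise statement about the cross-domain gradients that define $\overline{M}$, and then to exploit the monotonicity of that statement under the extra masking $\overline{M}\cdot M'$. First I would expand $\frac{1}{2}\mathcal{H}\nabla\mathcal{H}(\hat{S}_i,\hat{S})$ through the definition in \cref{th2}, i.e.\ the supremum over $h,h'\in\mathcal{H}$ of the disagreement gap between $\hat{S}_i$ and $\hat{S}$. Since every competing model is obtained from $\theta_0$ by a sparse update $M\nabla\theta$, I would linearize the induced change in the predictions on domain $i$ by a first-order Taylor expansion around $\theta_0$, so that the per-domain output shift is controlled by $\langle MG_i,\nabla\theta\rangle$, where $G_i$ is the domain-$i$ gradient. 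This converts the divergence between domain $i$ and the pooled source $\hat{S}$ into a quantity governed, coordinate by coordinate over the selected support, by the dispersion of $\{G_i^j\}_{i}$ across domains.

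Next I would make the link to $\overline{M}$ explicit. Writing the per-coordinate cross term $c_j=\sum_{i\neq i'}G_i^j\cdot G_{i'}^j$ and using the identity $\sum_{i\neq i'}(G_i^j-G_{i'}^j)^2=2N\sum_i(G_i^j-\overline{G}^j)^2$, I would express the coordinatewise gradient dispersion purely in terms of $c_j$ and the per-domain magnitudes, thereby identifying the coordinates that $\overline{M}$ keeps as exactly those whose cross-domain gradients are the most consistent. Consequently $\overline{M}\cdot M'$ is a sub-selection of the support of $M'$ from which the most dispersive coordinates have been deleted. I would then show that deleting such a coordinate can only lower each $\frac{1}{2}\mathcal{H}\nabla\mathcal{H}(\hat{S}_i,\hat{S})$: in the linearized regime each divergence decomposes into a sum of nonnegative per-coordinate contributions, so dropping a coordinate removes a nonnegative term. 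Passing to the maximum over $i$ is then immediate, since the per-domain inequality $\frac{1}{2}\overline{\mathcal{H}}\nabla\overline{\mathcal{H}}(\hat{S}_i,\hat{S})\le\frac{1}{2}\mathcal{H}'\nabla\mathcal{H}'(\hat{S}_i,\hat{S})$ holding for every $i$ forces it for the maximizing index as well.

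The hard part will be the first reduction, because $\mathcal{H}\nabla\mathcal{H}$ is defined through a supremum over $\mathcal{H}$ rather than directly through gradients, so the identification with $c_j$ is exact only in the first-order regime; I would discharge this either by invoking the assumption used elsewhere in the paper that $\theta_0$ is close to $A(\theta,S)$ — making the linearization tight — or by carrying the Taylor remainder and bounding it. A secondary difficulty is the $\max_i$: removing a coordinate lowers every domain's divergence simultaneously only if the per-coordinate contributions are genuinely nonnegative and separable across domains, so the sum-of-squares form supplied by the variance identity is essential, since without it a deletion could help one domain while the argmax shifts to another. Establishing that separable, nonnegative decomposition — rather than the routine monotonicity that then follows — is where I expect the real work to lie.
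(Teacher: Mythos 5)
Your proposal takes a genuinely different route from the paper, but it has a gap at exactly the step you defer to "the real work," and that step is not fillable by the tool you propose. The paper does not argue at first order at all: its proof imports the second-order machinery of Fishr \cite{Fishr}. It assumes (\cref{assumption1}) a shared optimum $\theta^*$ around which each domain risk is an exact quadratic with positive-definite Hessian $H_i$, invokes Fishr's Lemma~1 to rewrite the cross-domain instability functional $I^{\epsilon}(\theta^*)$ as $\max_{i,j}\bigl(R(h_{\theta^*},S_i)-R(h_{\theta^*},S_j)+\epsilon\max_k \lambda_k^j/\lambda_k^i\bigr)$, argues that masking high-variance coordinates shrinks the eigenvalue ratios $\lambda_k^j/\lambda_k^i$, and finally converts the reduction of $I^{\epsilon}$ into a reduction of the $\mathcal{H}$-divergence by constructing hypotheses from the $\epsilon$-neighborhood of $\theta^*$. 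The nonnegative, separable, per-coordinate structure your argument needs is supplied there by the quadratic forms $\frac{1}{2}\theta^{\top}H_j\theta$ with positive eigenvalues, i.e., it exists only at second order; cross-domain gradient variance enters as a proxy for Hessian mismatch, not as a first-order quantity.

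Your first-order reduction cannot deliver that structure, for two concrete reasons. First, the $G_i^j$ appearing in the definition of $\overline{M}$ are gradients of the \emph{risk} $R$, while $\frac{1}{2}\mathcal{H}\nabla\mathcal{H}(\hat S_i,\hat S)$ is a supremum over pairs $h,h'$ of differences of \emph{disagreement probabilities} $P(h\neq h')$ — a signed, non-smooth functional of the predictions. Linearizing predictions around $\theta_0$ gives per-sample, per-coordinate contributions of arbitrary sign that can cancel across coordinates and across the two distributions, so "each divergence decomposes into a sum of nonnegative per-coordinate contributions" is false in the linear regime; without that positivity, deleting a coordinate can raise an individual $\frac{1}{2}\mathcal{H}\nabla\mathcal{H}(\hat S_i,\hat S)$. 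Second, in the regime where the linearization is tight (near $A(\theta,S)$, which the paper assumes close to $\theta_0$), the first-order terms are precisely what optimality forces to be small — the cross-domain discrepancy the proposition is about lives in the curvature, which your expansion discards. Note also that the paper's shorthand $\frac{1}{2}\mathcal{H}\nabla\mathcal{H}$ absorbs $\lambda_{\mathcal{H}}$ (\cref{th2}), so one cannot rescue the claim by the cheap monotonicity "smaller support $\Rightarrow$ smaller supremum" either: shrinking the hypothesis class decreases the $d$-term but increases $\lambda_{\mathcal{H}}$, a tension your deletion argument never confronts and which the paper addresses (if loosely) through the stability functional $I^{\epsilon}$. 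Your variance identity and the final passage to $\max_i$ are fine; the proof fails before they are reached.
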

\begin{proof}
The objective of $\overline{M}$ is to filter out parameters with large gradient variances, which aligns closely with the goal in \cite{Fish, Fishr}. In \cite{Fishr}, the authors demonstrated the effectiveness by optimizing for gradient variances, targeting the high-variance parts of the gradient. Here, we briefly reference their conclusion to show that our goal is consistent with the detailed process available in the original paper:

\begin{assumption}\label{assumption1}
	Suppose there exists an optimal solution parameter $\theta^*$ on $S \cup T = D$, which satisfies the following for any domain:
	\begin{equation}
		R(h_{\theta},S_i) = R(h_{\theta^*},S_i)+\frac{1}{2}(\theta-\theta^*)^TH_i(\theta-\theta^*),
	\end{equation}
	where all the eigenvalues of $H_i$ are greater than 0. Then we define the following loss:
	\begin{equation}
		I^{\epsilon}(\theta^*) = \max\limits_{(i,j)}\max\limits_{\theta\in N^{\epsilon}_{S_i,\theta^*}}|R(h_{\theta^*},S_i)-R(h_{\theta},S_j)|,
	\end{equation}
\end{assumption}
 
where $N^{\epsilon}_{S_i,\theta^*}$ if there exists a path in the weights space between $\theta$ and$\theta^*$ where the risk $R(h_{\theta^*},S_i)$ remains in an $\epsilon \geq 0$  interval around $N^{\epsilon}_{S_i,\theta^*}$. This equation measures the stability of a local optimum across different domains. The smaller $I^{\epsilon}(\theta^*)$ is the better the model's stability.

\begin{lemma}
	(Lemma 1 from \cite{Fishr}.) Under \cref{assumption1} with some small $\delta$, we have the following equation holds:
	\begin{equation}
		I^{\epsilon}(\theta^*) = \max\limits_{i,j}(R(h_{\theta^*},S_i)-R(h_{\theta^*},S_j)\\
		+\max\limits_{\frac{1}{2}\theta^T H_i \theta\leq \delta}\frac{1}{2}\theta^T H_j \theta)
	\end{equation}
\end{lemma}

This lemma provides a transformation path, showing that the stability of the model can be controlled by its Hessian matrices across different domains. Specifically, it can be expressed as:
\begin{equation}
\begin{aligned}
	\operatorname*{max}_{\frac{1}{2}\theta^{\top}H_{i}\theta\leq\delta}\frac{1}{2}\theta^{\top}H_{j}\theta & =\max_{\|\tilde{\theta}\|_{2}^{2}\leq\delta}\sum_{k}\tilde{\theta}_{k}^{2}\lambda_{k}^{j}/\lambda_{k}^{i} \\
	& =\epsilon\times\max_i\lambda_k^j/\lambda_k^i.
\end{aligned}
\end{equation}
This expression indicates that by controlling the similarity of gradients across different domains, we can achieve a smaller ratio of $\lambda_k^j / \lambda_k^i$, thereby reducing the model's instability. Unlike the method in \cite{Fishr}, we employ the mask $\overline{M}$ to directly filter out unstable parameters, specifically those with high variance, which helps reduce $\lambda_k^j / \lambda_k^i$. However, it is important to note that this filtering approach ensures better stability than the original method, though it may not be optimal.

Additionally, we observe that the formula includes a term representing the loss difference across domains. According to the proof in \cite{Fishr}, our filtering method also reduces this loss from a feature perspective. Thus, when considering the original selection matrix $M$ and the new matrix $\overline{M}$, the loss $I^{\epsilon}(\theta_{M})$ is reduced to $I^{\epsilon}(\theta_{M \cdot \overline{M}})$. Based on the definition of $I$, we can conclude that the $\mathcal{H}$-divergence has reduced, as the difference between the model prediction distributions across domains is minimized (By sampling the corresponding $h$ from $N^{\epsilon}{S_i, \theta_M}$, we can construct $\mathcal{H}$). Since we are training by selecting parameters, both $\theta_M$ and $\theta_{M \cdot \overline{M}}$ satisfy the condition. This assumption aligns with the definition of H-divergence:
\begin{equation}
	\max\limits_{i}\frac{1}{2}\overline{\mathcal{H}}\nabla \overline{\mathcal{H}}({S_i},{S})\leq \max\limits_{i}\frac{1}{2}{\mathcal{H'}}\nabla {\mathcal{H'}}({S_i},{S}).
\end{equation}
Given that the data is sampled from a distribution, we have the \cref{prop3.4} holds :
\begin{equation}
	\max\limits_{i}\frac{1}{2}\overline{\mathcal{H}}\nabla \overline{\mathcal{H}}(\hat{S_i},\hat{S})\leq \max\limits_{i}\frac{1}{2}{\mathcal{H'}}\nabla {\mathcal{H'}}(\hat{S_i},\hat{S}).
\end{equation}
\end{proof}

\section{Details in Experiments\label{details}}

\subsection{Hyperparameter in Main Experiment\label{hyperparameters}}
We provide the optimal combinations of these parameters on the JPS in the table below for easy reproducibility. The remaining parameters not mentioned are the default values used in Domainbed\cite{domainbed} with weight decay fixed with 0. All experiments are conducted on a single NVIDIA RTX 4090 GPU. Note that our method adjusts the number of layers, but for different values of $L$, we use the same hyperparameters for each dataset. This choice is for simplicity. Specifically, after searching for the hyperparameters under $L=12$, we directly apply them to the cases where $L=8$, $L=4$, and $L=2$. Python: 3.8.17, PyTorch: 2.0.1, Torchvision: 0.15.2, CUDA: 11.7, CUDNN: 8500, NumPy: 1.22.0, PIL: 9.4.0.
\begin{table}[!ht]
	\centering
	\caption{Hyperparameter in Main Experiment}
		\begin{tabular}{@{}cccccc@{}}
		\bottomrule
		\toprule
		\multirow{2}{*}{Dataset} &
		\multirow{2}{*}{learning rate} &
		\multirow{2}{*}{dropout rate} &
		\multirow{2}{*}{$\rho$} &
		\multirow{2}{*}{validation set size} &
		\multirow{2}{*}{Batch size} \\
		&                        &                      &                         &                     &                     \\ \midrule
		\multicolumn{1}{c|}{\multirow{2}{*}{PACS}}       & \multirow{2}{*}{8e-05} & \multirow{2}{*}{0.5} & \multirow{2}{*}{0.001}  & \multirow{2}{*}{10} & \multirow{2}{*}{32} \\
		\multicolumn{1}{c|}{}                            &                        &                      &                         &                     &                     \\
		\multicolumn{1}{c|}{\multirow{2}{*}{VLCS}}       & \multirow{2}{*}{8e-05} & \multirow{2}{*}{0.3} & \multirow{2}{*}{0.001}  & \multirow{2}{*}{10} & \multirow{2}{*}{32} \\
		\multicolumn{1}{c|}{}                            &                        &                      &                         &                     &                     \\
		\multicolumn{1}{c|}{\multirow{2}{*}{OfficeHome}} & \multirow{2}{*}{8e-05} & \multirow{2}{*}{0}   & \multirow{2}{*}{0.0001} & \multirow{2}{*}{20} & \multirow{2}{*}{32} \\
		\multicolumn{1}{c|}{}                            &                        &                      &                         &                     &                     \\
		\multicolumn{1}{c|}{\multirow{2}{*}{TerraIncognita}} &
		\multirow{2}{*}{5e-05} &
		\multirow{2}{*}{0.8} &
		\multirow{2}{*}{0.2} &
		\multirow{2}{*}{50} &
		\multirow{2}{*}{32} \\
		\multicolumn{1}{c|}{}                            &                        &                      &                         &                     &                     \\
		\multicolumn{1}{c|}{\multirow{2}{*}{DomainNet}}  & \multirow{2}{*}{8e-05} & \multirow{2}{*}{0.1} & \multirow{2}{*}{0.0005} & \multirow{2}{*}{10} & \multirow{2}{*}{20} \\
		\multicolumn{1}{c|}{}                            &                        &                      &                         &                     &                     \\ \hline \bottomrule
	\end{tabular}
\end{table}

\subsection{Main Results with More Details\label{main_detail}}

To evaluate generalization, we designate one domain as the test set while splitting the remaining domains into training and validation sets (80\% for training and 20\% for validation). The larger portion is for training, and the smaller one serves as the validation set. We report the test accuracy of the model that achieves the highest validation accuracy, along with the corresponding validation accuracy. For instance, in the PACS dataset, we perform four experiments: in each, three domains are for training and validation, and the remaining domain is for testing (e.g., training on art, cartoon, and photo, while testing on sketch). The target and validation accuracies are averaged across all domains to calculate the overall accuracy. Each experiment is trained for 5000 steps, with validation and test accuracies logged every 200 steps. This process is repeated with five fixed random seeds, and the mean and variance of the target and validation accuracies are calculated and reported in the Main Experiment. 

Here, we provide additional details about the primary experiments, focusing on supplementary data sources and a more comprehensive comparison with parameter-efficient methods. Since parameter-efficient methods treat the number of tunable layers as a variable, which creates an inconsistency with our setup, we further introduce "JPS Best." This variant also considers the number of tunable layers as a variable and selects the best results from the different $L$ settings.

† indicates that the data for this method comes from \cite{GES}. ‡ indicates that the data for this method comes from \cite{PEGO}. Others are from the original paper \cite{vl2v}. Both methods adhere to the experimental setup requirements defined by us and DomainBed.

\begin{table}[!ht]
	\renewcommand{\arraystretch}{1.5}
	\normalsize
	\centering
	\caption{The main result  with more details. Present the best result in \textbf{bold}.}
		\begin{tabular}{@{}ccccccc@{}}
			\toprule \bottomrule
			Method &
			PACS &
			VLCS &
			OfficeHome &
			TerraInc &
			DomainNet &
			Avg. \\ \midrule
			\multicolumn{7}{c}{ResNet 50   pre-trained on ImageNet.} \\ \midrule
			\multicolumn{1}{c|}{ERM\textsuperscript{†}} &
			84.2 $\pm$ \small 0.1 &
			77.3 $\pm$ \small 0.1 &
			67.6 $\pm$ \small 0.2 &
			47.8 $\pm$ \small 0.6 &
			\multicolumn{1}{c|}{44.0 $\pm$ \small 0.1} &
			64.2 \\
			\multicolumn{1}{c|}{CORAL\textsuperscript{†}} &
			86.2$\pm$ \small 0.3 &
			78.8 $\pm$ \small 0.6 &
			68.7$\pm$ \small 0.3 &
			47.6$\pm$ \small 1.0 &
			\multicolumn{1}{c|}{41.5 $\pm$ \small 0.1} &
			64.5 \\
			\multicolumn{1}{c|}{MIRO\textsuperscript{†}} &
			85.4 $\pm$ \small 0.4 &
			79.0 $\pm$ \small 0.0 &
			70.5 $\pm$ \small 0.4 &
			50.4 $\pm$ \small 1.1 &
			\multicolumn{1}{c|}{44.3$\pm$ \small 0.2} &
			65.9 \\
			\multicolumn{1}{c|}{SAGM\textsuperscript{†}} &
			86.6  $\pm$ \small 0.2 &
			80.0  $\pm$ \small 0.3 &
			70.1$\pm$ \small 0.2 &
			48.8  $\pm$ \small 0.9 &
			\multicolumn{1}{c|}{45.0$\pm$ \small 0.2} &
			66.1 \\
			\multicolumn{1}{c|}{SWAD\textsuperscript{†}} &
			88.1 $\pm$ \small 0.1 &
			79.1 $\pm$ \small 0.1 &
			70.6$\pm$ \small 0.2 &
			50.0 $\pm$ \small 0.3 &
			\multicolumn{1}{c|}{46.5 $\pm$ \small 0.1} &
			66.9 \\ \midrule
			\multicolumn{7}{c}{ViT-B/16 pre-trained with CLIP.} \\ \midrule
			\multicolumn{1}{c|}{ERM\textsuperscript{†}} &
			93.7  $\pm$ \small 0.1 &
			82.7  $\pm$ \small 0.1 &
			78.5  $\pm$ \small 0.1 &
			52.3  $\pm$ \small 0.1 &
			\multicolumn{1}{c|}{53.8  $\pm$ \small 0.1} &
			72.2 \\
			\multicolumn{1}{c|}{DUPRG\textsuperscript{†}} &
			97.1  $\pm$ \small 0.2 &
			83.9  $\pm$ \small 0.5 &
			83.6  $\pm$ \small 0.3 &
			42.0  $\pm$ \small 1.3 &
			\multicolumn{1}{c|}{59.6  $\pm$ \small 0.3} &
			73.2 \\
			\multicolumn{1}{c|}{MIRO\textsuperscript{†}} &
			95.6  $\pm$ \small 0.8 &
			82.2  $\pm$ \small 0.3 &
			82.5  $\pm$ \small 0.1 &
			54.3  $\pm$ \small 0.4 &
			\multicolumn{1}{c|}{54.0  $\pm$ \small 0.3} &
			73.7 \\
			\multicolumn{1}{c|}{GESTUR\textsuperscript{†}} &
			96.0  $\pm$ \small 0.0 &
			82.8  $\pm$ \small 0.1 &
			84.2  $\pm$ \small 0.1 &
			55.7  $\pm$ \small 0.2 &
			\multicolumn{1}{c|}{58.9  $\pm$ \small 0.1} &
			75.5 \\
			\multicolumn{1}{c|}{VL2V} &
			94.3 $\pm$ \small 0.6 &
			82.3 $\pm$ \small 0.3 &
			85.8 $\pm$ \small 0.2 &
			55.3 $\pm$ \small 0.7 &
			\multicolumn{1}{c|}{59.2 $\pm$ \small 0.1} &
			75.5 \\ \midrule
			\multicolumn{7}{c}{ViT-B/16 with parameter efficient methods} \\ \midrule
			\multicolumn{1}{c|}{Adapter\textsuperscript{‡}} &
			92.0  $\pm$ \small 0.5 &
			79.8  $\pm$ \small 0.4 &
			72.9  $\pm$ \small 0.4 &
			44.4  $\pm$ \small 0.8 &
			\multicolumn{1}{c|}{56.2  $\pm$ \small 0.1} &
			69.1 \\
			\multicolumn{1}{c|}{LoRA\textsuperscript{‡}} &
			96.0  $\pm$ \small 0.1 &
			82.7 $\pm$ \small 0.0 &
			83.4  $\pm$ \small 0.1 &
			54.8  $\pm$ \small 0.6 &
			\multicolumn{1}{c|}{58.1  $\pm$ \small 0.1} &
			75.0 \\
			\multicolumn{1}{c|}{VPT\textsuperscript{‡}} &
			96.2  $\pm$ \small 0.3 &
			82.9  $\pm$ \small 0.3 &
			83.4  $\pm$ \small 0.3 &
			54.2  $\pm$ \small 0.7 &
			\multicolumn{1}{c|}{58.9 $\pm$ \small 0.1} &
			75.1 \\
			\multicolumn{1}{c|}{PEGO\textsuperscript{‡}} &
			\textbf{	96.5  $\pm$ \small 0.1} &
			83.2  $\pm$ \small 0.3 &
			83.4  $\pm$ \small 0.3 &
			57.3  $\pm$ \small 0.3 &
			\multicolumn{1}{c|}{\textbf{59.3  $\pm$ \small 0.1}} &
			76.1 \\
			\multicolumn{1}{c|}{JPS $L=12$} &
			95.8  $\pm$ \small 0.1 &
			83.2  $\pm$ \small 0.5 &
			84.4  $\pm$ \small 0.1 &
			57.9  $\pm$ \small 0.6 &
			\multicolumn{1}{c|}{58.8  $\pm$ \small 0.0} &
			76.0 \\
			\multicolumn{1}{c|}{$L=8$} &
			96.0  $\pm$ \small 0.0 &
			\textbf{83.7  $\pm$ \small 0.2} &
			\textbf{84.5  $\pm$ \small 0.1 }&
			\textbf{59.2  $\pm$ \small 0.8} &
			\multicolumn{1}{c|}{58.7 $\pm$ \small 0.1} &
			76.4 \\
			\multicolumn{1}{c|}{$L=4$} &
			96.4  $\pm$ \small 0.1 &
			82.9  $\pm$ \small 0.3 &
			84.0  $\pm$ \small 0.1 &
			54.0  $\pm$ \small 0.2 &
			\multicolumn{1}{c|}{58.0 $\pm$ \small 0.0} &
			75.1 \\
			\multicolumn{1}{c|}{$L=2$} &
			96.3  $\pm$ \small 0.1 &
			82.4  $\pm$ \small 0.0 &
			83.8  $\pm$ \small 0.0 &
			55.1  $\pm$ \small 1.0 &
			\multicolumn{1}{c|}{58.4 $\pm$ \small 0.0} &
			75.2 \\
			\multicolumn{1}{c|}{JPS best} &
			96.4  $\pm$ \small 0.1 &
			\textbf{83.7  $\pm$ \small 0.2} &
			\textbf{84.5  $\pm$ \small 0.1} &
			\textbf{	59.2  $\pm$ \small 0.8} &
			\multicolumn{1}{c|}{58.8  $\pm$ \small 0.0} &
			\textbf{76.6} \\ \hline \bottomrule
	\end{tabular}
\end{table}

\subsection{Detail Result for Each Dataset\label{dataset_detail}}

\textbf{Results for main experiment.}  Here, we present the generalization accuracy results of our method for each experiment. Given the large dataset size, we report the average results from a single experiment across four datasets. The calculation method is from SWAD \cite{swad} and DomainBed \cite{domainbed}. We have split the results into two tables for clarity, as shown in \cref{dataset1,dataset2}:
\begin{table}[!ht]
	\renewcommand{\arraystretch}{1.25}
	\normalsize
	\centering
	\caption{The result for each dataset (1). }
	\label{dataset1}
		\begin{tabular}{@{}ccccccccc@{}}
		\toprule
		& \multicolumn{4}{c}{PACS}                       & \multicolumn{4}{c}{VLCS}     \\ \midrule
		\multicolumn{1}{c|}{}             & L=12 & L=8  & L=4  & \multicolumn{1}{c|}{L=2}  & L=12  & L=8   & L=4   & L=2  \\
		\multicolumn{1}{c|}{Experiment 1} & 95.8 & 96.0 & 96.5 & \multicolumn{1}{c|}{96.3} & 84.0  & 83.6  & 82.8  & 82.3 \\
		\multicolumn{1}{c|}{Experiment 2} & 96.1 & 96.0 & 96.5 & \multicolumn{1}{c|}{96.4} & 83.0  & 83.9  & 82.5  & 82.4 \\
		\multicolumn{1}{c|}{Experiment 3} & 95.7 & 95.9 & 96.3 & \multicolumn{1}{c|}{96.2} & 82.7  & 83.4  & 83.3  & 82.4 \\ \midrule
		\multicolumn{1}{c|}{Mean}         & 95.8 & 96.0 & 96.4 & \multicolumn{1}{c|}{96.3} & 83.2  & 83.7  & 82.9  & 82.4 \\
		\multicolumn{1}{c|}{Std.}         & 0.2  & 0.0  & 0.1  & \multicolumn{1}{c|}{0.1}  & 0.5   & 0.2   & 0.3   & 0.0  \\ \midrule
		\multicolumn{1}{c|}{}             & \multicolumn{4}{c|}{OfficeHome}                & \multicolumn{4}{c}{TerraInc} \\ \midrule
		\multicolumn{1}{c|}{}             & L=12 & L=8  & L=4  & \multicolumn{1}{c|}{L=2}  & L=12  & L=8   & L=4   & L=2  \\
		\multicolumn{1}{c|}{Experiment 1} & 84.5 & 84.6 & 84.0 & \multicolumn{1}{c|}{83.9} & 58.7  & 58.7  & 54.3  & 54.2 \\
		\multicolumn{1}{c|}{Experiment 2} & 84.4 & 84.4 & 84.2 & \multicolumn{1}{c|}{83.8} & 57.4  & 58.6  & 53.8  & 56.5 \\
		\multicolumn{1}{c|}{Experiment 3} & 84.3 & 84.5 & 83.9 & \multicolumn{1}{c|}{83.8} & 57.6  & 60.3  & 53.9  & 54.7 \\ \midrule
		\multicolumn{1}{c|}{Mean}         & 84.4 & 84.5 & 84.0 & \multicolumn{1}{c|}{83.8} & 57.9  & 59.2  & 54.0  & 55.1 \\
		\multicolumn{1}{c|}{Std.}         & 0.1  & 0.1  & 0.1  & \multicolumn{1}{c|}{0.0}  & 0.6   & 0.8   & 0.2   & 1.0  \\ \bottomrule
	\end{tabular}
\end{table}

\begin{table}[!ht]
	\renewcommand{\arraystretch}{1.25}
	\normalsize
	\centering
	\caption{The result for each dataset (2). }
	\label{dataset2}
		\begin{tabular}{@{}ccccc@{}}
		\toprule
		\multicolumn{5}{c}{DomainNet}                                 \\ \midrule
		\multicolumn{1}{c|}{}             & $L=12$ & $L=8$  & $L=4$  & $L=2$  \\
		\multicolumn{1}{c|}{Experiment 1} & 58.8 & 58.7 & 58.0 & 58.4 \\
		\multicolumn{1}{c|}{Experiment 2} & 58.8 & 58.7 & 58.0 & 58.3 \\
		\multicolumn{1}{c|}{Experiment 3} & 58.8 & 58.6 & 58.1 & 58.4 \\ \midrule
		\multicolumn{1}{c|}{Mean}         & 58.8 & 58.7 & 58.0 & 58.4 \\
		\multicolumn{1}{c|}{Std.}         & 0.0  & 0.1  & 0.0  & 0.0  \\ \bottomrule
	\end{tabular}
\end{table}

\textbf{Results for accuracy and tunable parameters.} In \cref{accpa}, we compared our approach with various parameter-efficient methods and demonstrated the trade-off between tunable parameters and accuracy. Here, we provide a more detailed explanation: all hyperparameters and JPS results are derived from the main experiments. The key difference is that the PACS results are based on $L=4$, while the other two datasets use $L=8$. Additionally, the results of all parameter-efficient methods are from \cite{PEGO}.
It is worth noting that the parameters selected by JPS can vary depending on gradients and the chosen samples. To provide a comprehensive view, we present the accuracy and the corresponding number of tunable parameters for each experiment. The results are shown in \cref{eff1,eff2,eff3}. Here, Acc. represents the generalization accuracy of the model on the respective dataset, while tunable indicates the number of tunable parameters in the model, excluding the classification linear layer.

\begin{table}[!ht]
		\centering
	\caption{Details in PACS with $L=4$}
	\label{eff1}
	\begin{tabular}{@{}ccccccccc@{}}
		\toprule
		Experiment             & \multicolumn{2}{c}{Real World}     & \multicolumn{2}{c}{Product}        & \multicolumn{2}{c}{Clipart}        & \multicolumn{2}{c}{Art} \\ \midrule
		\multicolumn{1}{c|}{} & Acc. & \multicolumn{1}{c|}{Tunable} & Acc. & \multicolumn{1}{c|}{Tunable} & Acc. & \multicolumn{1}{c|}{Tunable} & Acc. & Tunable \\ \midrule
		\multicolumn{1}{c|}{1} & 90.390 & \multicolumn{1}{c|}{0.4K} & 89.977 & \multicolumn{1}{c|}{0.5K} & 73.912 & \multicolumn{1}{c|}{0.7K} & 83.213      & 0.7K      \\
		\multicolumn{1}{c|}{2} & 90.304 & \multicolumn{1}{c|}{0.4K} & 90.062 & \multicolumn{1}{c|}{0.5K} & 75.721 & \multicolumn{1}{c|}{0.7K} & 82.286      & 0.5K      \\
		\multicolumn{1}{c|}{3} & 89.816 & \multicolumn{1}{c|}{0.4K} & 90.034 & \multicolumn{1}{c|}{0.5K} & 75.515 & \multicolumn{1}{c|}{0.7K} & 82.853      & 0.5K      \\ \bottomrule
	\end{tabular}
\end{table}

\begin{table}[!ht]
	\centering
	\caption{Details in VLCS with $L=8$}
	\label{eff2}
		\begin{tabular}{@{}ccccccccc@{}}
			\toprule
			Experiment             & \multicolumn{2}{c}{V}              & \multicolumn{2}{c}{L}              & \multicolumn{2}{c}{C}              & \multicolumn{2}{c}{S} \\ \midrule
			\multicolumn{1}{c|}{} & Acc. & \multicolumn{1}{c|}{Tunable} & Acc. & \multicolumn{1}{c|}{Tunable} & Acc. & \multicolumn{1}{c|}{Tunable} & Acc. & Tunable \\ \midrule
			\multicolumn{1}{c|}{1} & 85.783 & \multicolumn{1}{c|}{4.0k} & 84.615 & \multicolumn{1}{c|}{4.0k} & 67.718 & \multicolumn{1}{c|}{4.7k} & 96.378     & 6.0k     \\
			\multicolumn{1}{c|}{2} & 85.228 & \multicolumn{1}{c|}{3.8k} & 82.902 & \multicolumn{1}{c|}{3.8k} & 68.188 & \multicolumn{1}{c|}{4.8k} & 97.438     & 6.3k     \\
			\multicolumn{1}{c|}{3} & 84.932 & \multicolumn{1}{c|}{4.0k} & 84.844 & \multicolumn{1}{c|}{3.8k} & 68.424 & \multicolumn{1}{c|}{5.2k} & 97.261     & 6.3k     \\ \bottomrule
	\end{tabular}
\end{table}

\begin{table}[!ht]
	\centering
	\caption{Details in VLCS with $L=8$}
	\label{eff3}
		\begin{tabular}{@{}ccccccccc@{}}
			\toprule
			Experiment             & \multicolumn{2}{c}{Real World}     & \multicolumn{2}{c}{Product}        & \multicolumn{2}{c}{Clipart}        & \multicolumn{2}{c}{Art} \\ \midrule
			\multicolumn{1}{c|}{} & Acc. & \multicolumn{1}{c|}{Tunable} & Acc. & \multicolumn{1}{c|}{Tunable} & Acc. & \multicolumn{1}{c|}{Tunable} & Acc. & Tunable \\ \midrule
			\multicolumn{1}{c|}{1} & 90.390 & \multicolumn{1}{c|}{0.4K} & 89.977 & \multicolumn{1}{c|}{0.5K} & 73.912 & \multicolumn{1}{c|}{0.7K} & 83.213      & 0.7K      \\
			\multicolumn{1}{c|}{2} & 90.304 & \multicolumn{1}{c|}{0.4K} & 90.062 & \multicolumn{1}{c|}{0.5K} & 75.721 & \multicolumn{1}{c|}{0.7K} & 82.286      & 0.5K      \\
			\multicolumn{1}{c|}{3} & 89.816 & \multicolumn{1}{c|}{0.4K} & 90.034 & \multicolumn{1}{c|}{0.5K} & 75.515 & \multicolumn{1}{c|}{0.7K} & 82.853      & 0.5K      \\ \bottomrule
	\end{tabular}
\end{table}
From a more detailed perspective, the number of tunable parameters in our model varies dynamically and fluctuates across different datasets. However, under the same validation set, the tunable parameters in our model remain relatively stable. This result indicates that our model effectively identifies the appropriate tunable parameters. The differences observed across datasets arise due to variations in the data itself. Since these parameters are selected autonomously by the JPS algorithm, our approach exhibits strong adaptability, producing stable results for the same dataset.



\end{document}